\newcommand{\Xv}{\mbox{\boldmath$X$}}
\newcommand{\xv}{\mbox{\boldmath$x$}}
\newcommand{\piv}{\mbox{\boldmath$\pi$}}
\newcommand{\lv}{{\bf 1}}
\newcommand{\pmis}{P_{\rm mc}}
\newtheorem{prop}{Proposition}
\title{Interpretable Clustering with the Distinguishability Criterion}
\author{ Ali Turfah \\
	Department of Biostatistics\\
	University of Michigan\\
	Ann Arbor, MI, 48105 \\
	\texttt{aturfah@umich.edu} \\
	\And
	Xiaoquan Wen\thanks{Corresponding Author} \\
	Department of Biostatistics\\
	University of Michigan\\
	Ann Arbor, MI, 48105 \\
	\texttt{xwen@umich.edu} \\
}
\date{April 24, 2024}
\begin{document}

\maketitle

\begin{abstract}
    Cluster analysis is a popular unsupervised learning tool used in many disciplines to identify heterogeneous sub-populations within a sample. 
    However, validating cluster analysis results and determining the number of clusters in a data set remains an outstanding problem. 
    In this work, we present a global criterion called the Distinguishability criterion to quantify the separability of identified clusters and validate inferred cluster configurations. 
    Our computational implementation of the Distinguishability criterion corresponds to the Bayes risk of a randomized classifier under the 0-1 loss. 
    We propose a combined loss function-based computational framework that integrates the Distinguishability criterion with many commonly used clustering procedures, such as hierarchical clustering, $k$-means, and finite mixture models. 
    We present these new algorithms as well as the results from comprehensive data analysis based on simulation studies and real data applications.
\end{abstract}

\keywords{Unsupervised learning \and Cluster analysis \and k-means \and Hierarchical clustering \and Mixture models}

\section{Introduction}

Cluster analysis is a ubiquitous unsupervised learning approach to uncover latent structures and patterns in observed data. 
Clustering algorithms have been used in a wide variety of scientific applications, such as animal behavior studies \cite{braun2010identifying}, weather anomaly detection \cite{wibisono2021multivariate}, disease diagnosis \cite{ahmad2015techniques, kao2017spatial, shafqat2020big}, and novel cell type identification \cite{xie2018qubic2, kiselev2019challenges, kanter2019cluster, qi2020clustering}. 
Often, the identified clusters are interpreted to represent distinct populations from which the corresponding samples originate.

Many challenges with cluster analysis, such as determining the number of clusters, arise from an inability to rigorously quantify desired cluster characteristics. 
While the precise definition of a ``meaningful'' cluster is usually context-dependent, it is generally accepted that the clusters should display ``internal cohesion'' (i.e., objects within a cluster are similar to one another) and ``external isolation'' (i.e., the clusters are well-separated) \cite{wolfe1963object, cormack1971review, oyewole2023data}. 
Despite this almost universally agreed-upon principle, quantifying the separability of the clusters, i.e., the level of external isolation with respect to internal cohesion, remains an open problem in cluster analysis.

In this paper, we introduce the Distinguishability criterion to measure the separability of a set of assumed clusters. 
The criterion is motivated by this simple intuition: if all clusters are well separated from each other, then the originating clusters for all data points (whether observed or not) should be easily traceable.
To implement the Distinguishability criterion, we formulate labeling the generating cluster for an arbitrary data point as a probabilistic classification problem.
Naturally, the difficulty (or lack thereof) of this classification problem can be described by an overall misclassification probability averaged over all possible data points.

We employ a statistical viewpoint to define the Distinguishability criterion for cluster analysis.
The partitioned observed data are taken to be realizations from cluster-specific data generative distributions, which are essential for computing the proposed misclassification probability.
Although not all clustering algorithms make explicit distributional assumptions for the presumed clusters, many commonly applied heuristics-based algorithms achieve optimal performance under specific probabilistic generative models 
\cite{Bock_1996, Fraley_2002}.
Furthermore, the identified cluster structures from cluster analysis are typically expected to be replicated in future datasets---an implicit assumption for consistent data generative distributions. 
As a result, statistical inference procedures based on explicit distributional assumptions have become more popular for their ability to not only enhance clustering performance but also to examine and interpret cluster structures identified by both model and heuristics-based clustering methods \cite{Kimes_2017, gao2022selective, chen2022selective, Grabski_2023}.

The remainder of this paper is organized as follows. 
We first provide the mathematical definition and properties of the Distinguishability criterion. 
We then discuss its usage with existing clustering algorithms. 
Finally, we illustrate the applications of the Distinguishability criterion using both synthetic and real data from various scientific applications. 
Our implementation of the Distinguishability criterion and the analyses presented in this paper can be found at \texttt{\href{https://github.com/aturfah/distinguishability-criterion}{https://github.com/aturfah/distinguishability-criterion}}.

\section{Results}

\subsection{The Distinguishability Criterion} \label{sec:distinguishability_criterion}

The proposed Distinguishability criterion measures the overall separability of a given cluster configuration and is derived by quantifying the misclassification probability from a multi-class classification problem.

Given a cluster configuration where each of the $K$ clusters corresponds to a distinct class, we denote the class label for an observation $\xv$ by $ \theta \in \{ 1,2,..., K\}$. We assume a pre-defined classifier, $\delta(\xv): \mathbb{R}^p \to \{1,...,K\}$, and evaluate the classification performance using the 0-1 loss function, i.e.,

\begin{equation*}
    L(\delta(\xv), \theta) = \lv \{\delta(\xv) \ne \theta \}.
\end{equation*}  

The overall misclassification probability under the assumed cluster configuration, denoted by $\pmis$, is defined as the Bayes risk of the classifier $\delta(\xv)$, i.e.,

\begin{equation}
    \pmis = \mathrm{E}_{\xv} \Big[ \, \mathrm{E}_{\theta} \big( \, L(\delta(\xv), \theta) \mid  \xv \,\big) \, \Big] 
\end{equation}

Using the 0-1 loss ensures that the resulting Bayes risk is a valid probability measurement, ranging from 0 to 1. 
It is naturally interpreted as the probability of misclassifying a data point under the given cluster configuration, marginalizing all potential $\xv$ values and their respective true generating clusters. 
For instance, a $\pmis$ value close to 0 signifies a high degree of cluster separation, indicated by a minimal probability of erroneously assigning an arbitrary data point to an incorrect generating cluster (Supplementary Figure 2).

As we are primarily interested in assessing different cluster configurations, the selection of the required classifier is flexible. 
However, the choice of classifier can impact the computational efficiency of the $\pmis$ evaluation.
Our implementation focuses on the set of classifiers working directly with the probabilities

\begin{equation*}
    \pi_k (\xv) := \Pr(\theta = k \mid \xv), ~\mbox{ for } k = 1,...,K.
\end{equation*}

This set includes the optimal classifier under the 0-1 loss, $\delta_o$, i.e.,

\begin{equation*}
    \delta_o(\xv) = \arg\max_k \pi_k(\xv)
\end{equation*}

Our default classifier for computing $\pmis$ is a randomized decision function, $\delta_r$, which assigns a label to an observation $\xv$ by sampling from a categorical distribution based on the probability distribution $\piv (\xv) = \left(\pi_1 (\xv),...,\pi_K(\xv)\right)$, i.e.,

\begin{equation*}
    \delta_r(\xv) \sim {\rm Categorical}\, (\piv(\xv)).
\end{equation*}

In addition to yielding highly comparable $\pmis$ values to the optimal classifier within the decision-critical ranges of cluster separation (Supplementary Figure 1), the randomized classifier's computational properties enable highly efficient cluster analysis procedures.

The $\pi_k$'s are the key quantities bridging the observed clustering data and $\pmis$. They are calculated using Bayes rule,

\begin{equation*}
    \pi_k (\xv) \, \propto \, \alpha_k(\Xv_c) \, p(\xv \mid \theta(\Xv_c) = k).
\end{equation*} 

The notation emphasizes that both the prior, $\alpha_k(\Xv_c)$, and the likelihood function, $p(\xv \mid \theta(\Xv_c))$, are directly informed by and estimated from the clustering data.
More specifically, the prior quantifies the relative frequency of the observations arising from each assumed cluster, while the likelihood function encodes the characteristics of the corresponding cluster population, such as its centroid and spread information.
Computing $\pi_k$ values is straightforward for model-based clustering algorithms such as Gaussian mixture models (GMMs) \cite{Fraley_2002}.
For non-model-based clustering algorithms, explicit distributional assumptions specifying the parametric family of likelihood functions are required.
We illustrate these procedures for $k$-means and hierarchical clustering algorithms in subsequent sections. 

In summary, $\pmis$ is a probability measurement of global separability across inferred clusters. 
It can accommodate a wide range of distributional assumptions, making it compatible with a diverse set of clustering procedures and data modalities. 
Moreover, as a function of clustering data, $\Xv_c$, the estimate of $\pmis$ itself is a valid loss function suitable for selecting optimal cluster configurations in cluster analysis. 

\subsection{Combined Loss Function for Cluster Analysis} \label{sec:combined_loss}

In cluster analysis, the desired cluster characteristics are often defined by multiple criteria \cite{Hennig_2015}. A single criterion on its own, including the proposed Distinguishability criterion, is insufficient to define a practically optimal clustering solution. 
Alternatively, combining multiple loss functions targeting different desired cluster properties can result in more balanced and holistic clustering solutions. 
This observation leads to a principled way to incorporate $\pmis$ with other established clustering criteria and algorithms. 

Specifically, let $L_1$ denote a loss function associated with existing clustering algorithms. Formally, we consider a compound loss, $L$, as a weighted linear combination of $L_1$ and $\pmis$, i.e.,

\begin{equation}
    L = L_1 + \lambda \pmis, ~ \lambda >  0.
\end{equation} 

Because of the scale of $\pmis$, it is often convenient to solve the following equivalent constrained optimization problem,

\begin{equation}\label{combined.loss2}
    \mbox{Minimize $L_1$,  subject to } \pmis \le \tau,  
\end{equation}

where $\tau$ is a pre-specified probability threshold. It is worth noting that the stringency of the $\tau$ value may depend on the dimensionality of the clustering data.

The choice of clustering algorithm determines the functional form of $L_1$. For example, the distortion function or Ward's linkage are natural choices for $L_1$ when using $k$-means and hierarchical clustering methods, respectively. 
Alternatively, the negative of the gap statistic \cite{tibshirani2001estimating} can also be an excellent choice in these application scenarios.
For model-based clustering algorithms, the $L_1$ function can be derived from various model selection criteria, e.g., the negative of Bayesian information criterion (BIC). 

\subsection{Connections to Related Approaches}

The misclassification probability defined by $\pmis$ falls into the category of internal clustering validity indices \cite{Halkidi_2001, Kim_2005, Liu_2010}, which assess the quality of a clustering solution without additional external information beyond the observed data.
This category includes many commonly applied statistical measures, such as the Silhouette index \cite{rousseeuw1987silhouettes}, Calinski-Harabaze index \cite{calinski1974dendrite}, Dunn index \cite{dunn1974well}, among others. 
A common behavior of internal clustering validity indices is that they evaluate both the cohesion (or compactness) within a cluster as well as the separation between clusters. 
For $\pmis$, the within-cluster cohesion is quantified through the estimated parameters in the likelihood function, $p(\xv \mid \theta(\xv_c) = k)$.
The separation, relative to the cohesion, is quantified by the overall misclassification probability.

Henning \cite{Hennig_2015} and Melnykov \cite{Melnykov_2016} also compute misclassification probabilities to assess the separation between clusters in the context of mixture models for clustering.
They introduce the metrics---named ``directly estimated misclassification probability" (DEMP) and DEMP+---specifically designed to compute misclassification probabilities between pairs of clusters to inform decisions about the local merging of mixture components.
In comparison, $\pmis$ is a global measure of the misclassification probability across all clusters. It, too, can be used to combine mixture components to form interpretable clusters, as illustrated in Section \ref{sec:mixture_model_pmis}.
Additionally, the entropy criterion proposed by Celeux and Soromenho \cite{Celeux_1996, Biernacki_2000, Baudry_2010} provides another alternative approach to quantify the separability of clusters using a classification problem set-up.

The Distinguishability criterion also agrees with the principle of stability measures commonly employed in clustering analysis. Specifically, if the underlying cluster distributions are all well-separated, as indicated by low $\pmis$ values, data sampled repeatedly from these distributions are expected to produce consistent clustering outcomes \cite{Luxburg_2010,lange2004stability,tibshirani2005cluster}.
Empirical evidence has shown that $\pmis$ and various measures of clustering instability are highly correlated, which is demonstrated in the subsequent sections.

\subsection{Finite Mixture Models Incorporating $\pmis$} \label{sec:mixture_model_pmis}

Finite mixture models (MM) are probabilistic models that can seamlessly incorporate the Distinguishability criterion.
MM-based clustering algorithms primarily infer the distributional characteristics underlying each latent cluster. 
As a result, no additional assumptions are needed to compute $\pmis$ in the MM setting --- the required quantities, $\{\alpha_k, \, p( \xv \mid \theta=k), \, \pi_k\}$, are all direct outputs or by-products from standard MM inference procedures \citep{Fraley_2002}, e.g., the EM algorithm.

We make an important distinction between a mixture component and an interpretable cluster, a point previously discussed by \cite{Melnykov_2016, Biernacki_2000, Baudry_2010, Hennig_2010}.
We view mixture models as flexible density estimation devices, where the number of mixture components is chosen to adequately fit the observed data. On the other hand, the distribution of an underlying cluster---characterized by the Distinguishability criterion---may itself be a mixture distribution comprising multiple components. This distinction naturally leads to combining the loss functions represented by $-{\rm BIC}$, which evaluates the goodness-of-fit of a mixture density, and $\pmis$, which characterizes the separation between potential clusters.

To optimize the combined loss function, we first find $P(\xv)$, the optimal mixture distribution with $\kappa$ components, by maximizing the BIC. Subsequently, we merge the mixture components into clusters until $\pmis$ falls below a pre-specified threshold $\tau$. 
Since merging mixture components into clusters does not alter the mixture component distributions in any way, the BIC is unchanged by the merging process. 
It can be shown that merging existing clusters in this manner always decreases $\pmis$ (Appendix \ref{app:merge_property}). 
Specifically, under the default randomized classifier $\delta_r$, the reduction of $\pmis$ by combining clusters $i$ and $j$ is given by

\begin{equation}
    \Delta \pmis^{(i, j)} = 2 \int \pi_i(\xv) \pi_j(\xv) \, P(d \xv).
\end{equation} 

For clusters with little to no overlap, i.e., $\pi_i(\xv) \pi_j(\xv) \to 0$ for all $\xv$ values, merging $(i,j)$ results in minimal changes in $\pmis$. On the other hand, for clusters with significant overlap, $\Delta \pmis^{(i,j)}$ can be substantial.

By further utilizing the cluster merging property of $\pmis$ (Proposition \ref{prop:pmis_merge}, Methods Section), i.e.,  

\begin{equation}
    \pmis = \sum_{i<j} \Delta \pmis^{(i, j)}, 
\end{equation}

we propose an efficient $\pmis$ Hierarchical Merging (PHM) algorithm to sequentially amalgamate mixture components into clusters (Algorithm \ref{alg:pmis_gmm_algorithm}). 
Briefly, starting by assigning each of the $\kappa$ mixture components to individual clusters, the PHM algorithm pre-computes $\Delta \pmis^{(i, j)}$ for all $(i, j)$ cluster pairs. 
It then sequentially combines the pairs of clusters with the largest $\Delta \pmis^{(i, j)}$ into a single cluster and updates the $\Delta \pmis$ values for the remaining clusters. 
The process is repeated until the updated $\pmis$ falls at or below a pre-defined $\tau$ value. 
Intuitively, this procedure prioritizes merging the most similar or closely related clusters at each step, quantified by their $\Delta \pmis^{(i, j)}$ value.

\begin{algorithm}[b!]
    \DontPrintSemicolon
    \caption{$\pmis$ Hierarchical Merging (PHM) algorithm}
    \label{alg:pmis_gmm_algorithm}
    \KwIn{Input data $X$, $\pmis$ threshold $\tau$}
    \KwResult{Groupings of mixture components into clusters}
    \textbf{Procedure} PHM\;
    \Indp
    Fit a mixture model to $X$, determining the number of components by maximizing BIC\;
    Initialize clusters to individual mixture components\;
    Compute $\Delta \pmis^{(i, j)}$ for all pairs of clusters $i, j$\;
    \While {$\pmis > \tau$} {
        Group clusters $i,j$ with maximal $\Delta \pmis^{(i, j)}$ into a single cluster $k'$\;
        Update the distribution quantities for this new cluster: \;
            \Indp $\alpha_{k'} \gets \alpha_j + \alpha_i$ \;
            $p( \xv \mid \theta=k') \gets \alpha_{k'}^{-1} \cdot \left[ \, \alpha_i \cdot p( \xv \mid \theta=i) + \alpha_j \cdot p( \xv \mid \theta=j) \, \right]$ \;
            $\pi_{k'}(\xv) \gets \pi_{i}(\xv) + \pi_{j}(\xv)$  \;
        \Indm Update $\pmis \gets \pmis  - \Delta \pmis^{(i, j)}$ \;
        Compute $\Delta \pmis^{(k', k)}$ for all uninvolved clusters $k$: $\Delta \pmis^{(k', k)} = \Delta \pmis^{(i, k)} + \Delta \pmis^{(j, k)}$ \;
    }
    \Return 
\end{algorithm}

By setting $\tau = 0$, the algorithm runs until all mixture components have been merged into a single cluster. 
The complete merging process can be visualized using a dendrogram (Appendix \ref{app:dendrogram}), characterizing the hierarchical merging orders between individual mixture components and merged clusters. 
In many scientific applications, e.g., genetics and single-cell data analysis, such a dendrogram can provide a snapshot of the underlying continuous differentiation process that forms the identified clusters. We provide two examples in our real data applications (Section \ref{sec:real_data}).

To illustrate the PHM algorithm with Gaussian mixture models (GMMs), we use the synthetic data from Section 4.1 in Baudry et al. \citep{Baudry_2010}.
Specifically, 600 observations are drawn from six Gaussian distributions arranged along the corners of a square in the following manner. 
Two overlapping Gaussian distributions are placed in the top left corner of the square, each with 1/5 of the samples. 
The bottom left and top right corners each have a single Gaussian distribution, each contributing 1/5 of the samples. 
Finally, two overlapping Gaussian distributions are placed in the bottom right corner, each with 1/10 of the samples.

A GMM with six components is selected by BIC using the \texttt{R} package \texttt{mclust} \cite{mclust}. 
The observed clustering data is shown in the left panel of Figure \ref{fig:baudry_heatmap}, with colors corresponding to an observation's assignment to one of the $\kappa$ GMM components. 
The initial cluster configuration labeling each mixture component as a single cluster has $\pmis = 0.139$.
The heatmap in the right panel of Figure \ref{fig:baudry_heatmap} visualizes the $\Delta \pmis^{(i, j)}$ contributions from each pair of mixture components, showing that the main sources of $\pmis$ come from the overlapping components in the top left and bottom right corners.

\begin{figure}[t!]
    \centering
    \includegraphics[]{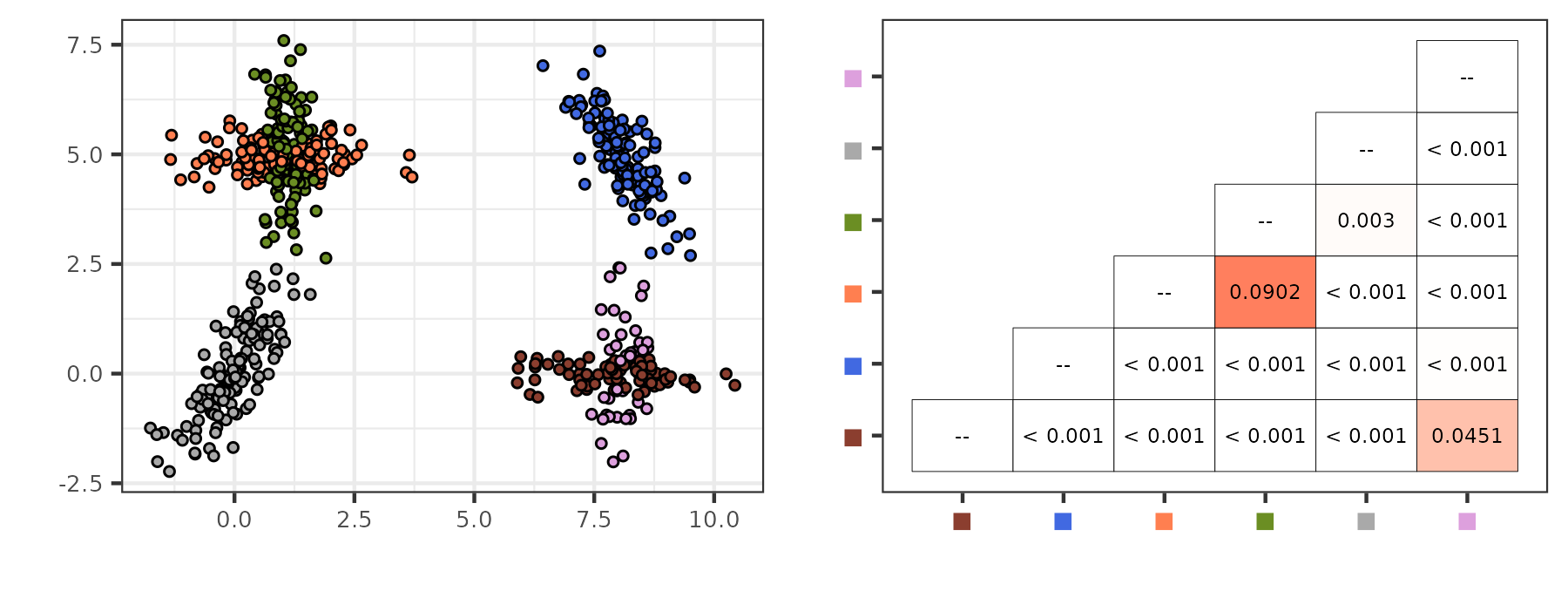}
    \caption{\textit{(Left)} 600 simulated observations drawn from a mixture of six two-dimensional Gaussian distributions. Colors indicate the cluster assignment labels to each of the six estimated mixture components. \textit{(Right)} Heatmap visualizing $\Delta \pmis$ values for the estimated mixture components. The intensity of the color indicates the relative proportion of $\pmis$ contributed by the overlap between these components (i.e., $\Delta \pmis^{(i, j)}$).}
    \label{fig:baudry_heatmap}
\end{figure}

With a threshold of $\tau = 0.01$, the PHM algorithm sequentially combines the mixture components in the top left and bottom right corners, reducing the values of $\pmis$ to 0.049 and 0.004, respectively. 
In the end, the algorithm returns four clusters: one corresponding to the components in each of the four corners.

\subsection{Applications in $k$-means Clustering} \label{sec:kmeans_procedure}

Applying the Distinguishability criterion to heuristics-based clustering algorithms requires additional distributional assumptions to compute $\pmis$.  
Although popular algorithms of this kind---such as $k$-means and hierarchical clustering---rely on intuitive heuristics, their implicit connections to probability models have been well studied.
These results provide insights into the data types for which certain non-model-based clustering algorithms are expected to be optimal. 
The $k$-means algorithm, in particular, has been shown as equivalent to approximately optimizing a multivariate Gaussian classification likelihood function \cite{Bock_1996, Fraley_2002}.
Hence, when applying the Distinguishability criterion with the usual $k$-means distortion function, it seems natural to assume that data within each inferred partition are normally distributed.
It then becomes straightforward to estimate the necessary parameters and compute $\pmis$ given the partitioned data from the $k$-means output.

We illustrate a procedure to determine the number of clusters ($K$) in $k$-means clustering by optimizing the combined loss of $\pmis$ and the gap statistic \cite{tibshirani2001estimating}.
For a given data partitioning from the $k$-means algorithm, the gap statistic compares the observed within-cluster dispersion to the expected dispersion under a null reference distribution.
It subsequently estimates the optimal number of clusters corresponding to the largest gap statistic value from a range of potential $K$ values. 
We consider observations drawn from three Gaussian clusters in $\mathbb{R}^2$ centered at (0, 0), (1.75, 1.75), and (-4, 4) with identity covariance matrices ($n_k = 150$). The observed data are shown in Figure \ref{fig:kmeans_overlap}, with substantial overlap between the data generated from two of the three distributions.
The $k$-means algorithm is performed for $K = 1$ to 7 using the R package \texttt{ClusterR} \citep{ClusterR} with \texttt{kmeans++} initialization \citep{arthur2007k}. In addition to the $\pmis$ values, we also present the averaged Silhouette index \cite{rousseeuw1987silhouettes} for each value of $K=2, \dots, 7$.

The values of the gap statistic, $\pmis$, and the Silhouette index for different $K$ are shown in Figure \ref{fig:kmeans_overlap}. When used alone, the gap statistic selects $K = 3$, coinciding with the number of generating distributions.
There is a noticeable difference in the inferred clusters' separability between $K=2$ ($\pmis = 0.002$) and $K=3$ ($\pmis = 0.062$).
For a $\pmis$ threshold $\tau = 0.01$, optimizing the combined loss function (\ref{combined.loss2}) leads to selecting $K = 2$, which seems most reasonable judging by the data visualization.
This decision is also consistent with the Silhouette index, which takes a maximal value of 0.632 at $K= 2$ compared to 0.522 at $K = 3$.
Additionally we find that $\pmis$ is highly negatively correlated with cluster stability measures. We present the values of the stability measure proposed by Lange et al. \cite{lange2004stability} using the adjusted Rand index to compute cluster similarities as well as the prediction strength \cite{tibshirani2005cluster} in Supplementary Table 1.

As noticed in the original paper \citep{tibshirani2001estimating}, the gap statistic can struggle to determine $K$ when the underlying generative distributions have substantial overlapping support.   
The above numerical example illustrates that the Distinguishability criterion can alleviate this challenge in $k$-means clustering. 

\begin{figure}[t!]
    \centering
    \includegraphics[]{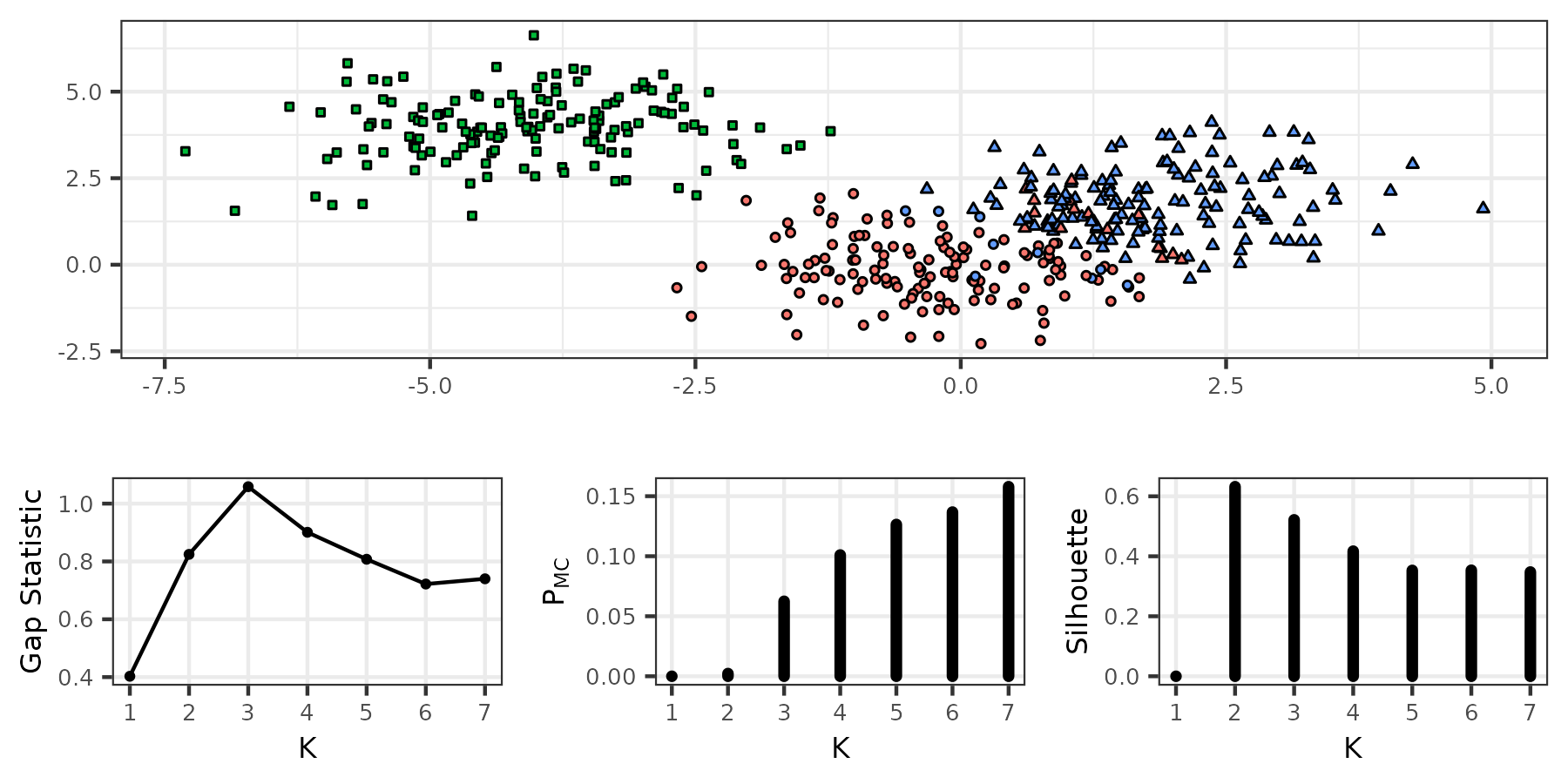}
    \caption{\textit{(Left)} 450 simulated observations drawn from a mixture of three Gaussian distributions. Color indicates true generating distribution while shape indicates the assigned $k$-means cluster. (\textit{Center and Right}) Value of the gap statistic, $\pmis$, and the Silhouette index for different numbers of clusters based on the $k$ means clustering partition with Gaussian cluster distributions.}
    \label{fig:kmeans_overlap}
\end{figure}

\subsection{Hypothesis Testing in Hierarchical Clustering}

It has become increasingly common to perform formal statistical testing in post-clustering analysis to reduce cluster over-identification. 
Recent works by Gao et al. \citep{gao2022selective}, Chen et al. \cite{chen2022selective}, and Grabski et al. \citep{Grabski_2023} highlight the necessity and importance of such analysis in scientific applications, where false positive findings of clusters are considered more costly than false negative findings. 
In many application contexts, it is often of interest to assess whether the partitions of the observed data output by heuristics-based clustering algorithms (e.g., hierarchical clustering) could arise from a single homogeneous distribution (most commonly, a single Gaussian distribution) by chance.  
$\pmis$ emerges as a natural test statistic in this parametric hypothesis testing framework because its values are expected to be quantitatively different under the null and alternative scenarios.

Formally, we consider testing the null hypothesis,

\begin{equation*}
    H_0: \mbox{the data are generated from a single Gaussian distribution}
\end{equation*}

in the hierarchical clustering setting \cite{Kimes_2017, gao2022selective,Grabski_2023}.
Following hierarchical clustering of observed data, we compute $\pmis$ for the pair of inferred clusters resulting from the first split of the dendrogram (i.e., $K=2$). 
The null distribution of $\pmis$ can be estimated by a simple Monte Carlo procedure that repeatedly samples data from $H_0$, performs hierarchical clustering, and computes $\pmis$ for the partitions defined by the first split.
Alternatively, the $p$-value of the $\pmis$ statistic can be derived from the standard parametric bootstrap procedure \cite{hinkley1988bootstrap}. 

We present the results of simulation studies examining the performance of $\pmis$ in this hypothesis testing setting.

\begin{figure}[b!]
    \centering
    \begin{subfigure}{1\textwidth}
        \centering
        \includegraphics[]{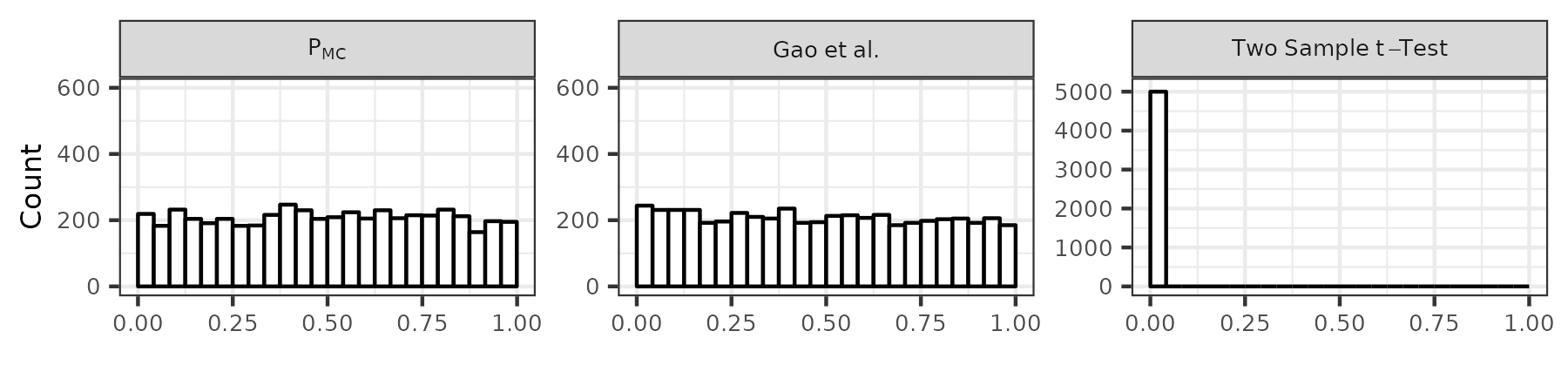}
    \end{subfigure}%
    \newline
    \begin{subfigure}{1\textwidth}
        \centering
        \includegraphics{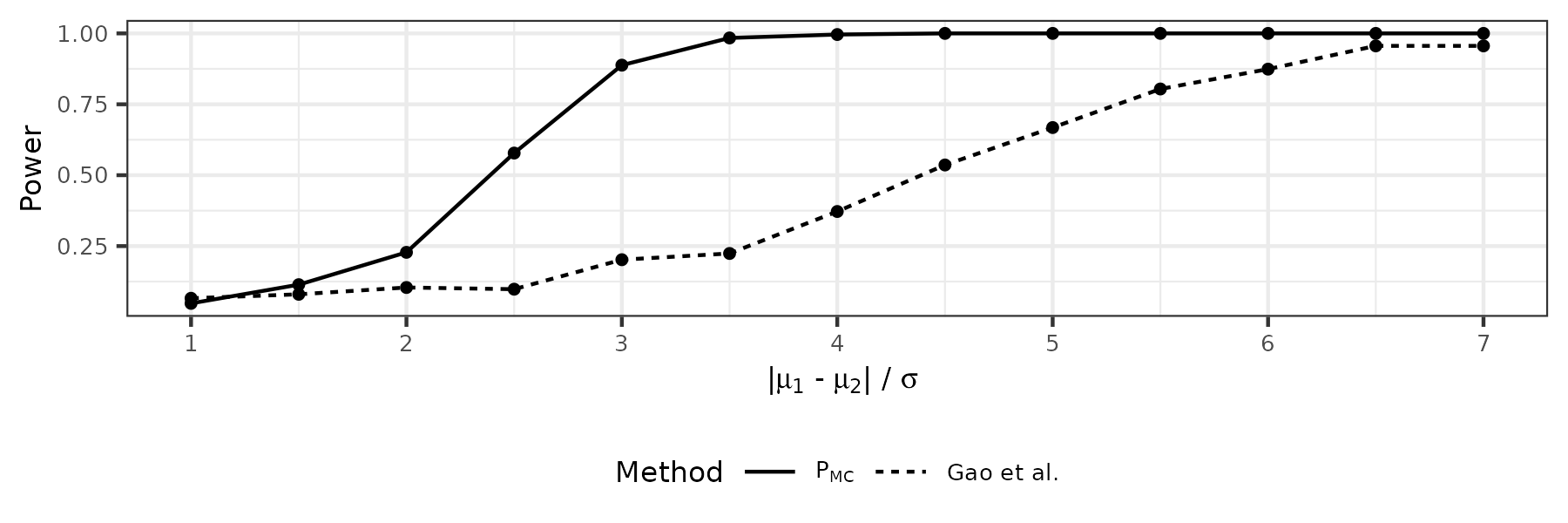}
    \end{subfigure}
    \caption{(\textit{Top}) The distribution of $p$-values based on $\pmis$, Gao et al.'s selective inference procedure, and the two-sample t-test for 5,000 simulation replicates. (\textit{Bottom}) Power comparison between $\pmis$ and Gao et al.'s method to detect the presence of two Gaussian clusters controlling the type I error rate at level $\alpha = 0.05$ as the cluster separability increases. The power at each value $|\mu_1 - \mu_2| / \sigma$ is calculated based on 500 simulation replicates.}
    \label{fig:hyp_test}
\end{figure}

First, we illustrate $\pmis$'s ability to control for spurious cluster detection. For 5,000 simulation replicates, we draw 150 observations from a $N(0, 1)$ distribution and perform hierarchical clustering based on the squared Euclidean distance with the Ward linkage. 
We compute $\pmis$ and derive the corresponding $p$-values based on the estimated null distribution from 5,000 additional Monte Carlo simulations, each with a sample size of 150. 
We also compute the $\pmis$ $p$-values using a bootstrap procedure, using 500 bootstrap replicates to estimate the $p$-value for each simulation replicate. 
For comparison, we also compute the $p$-values from the selective inference procedure proposed by Gao et al. \citep{gao2022selective} as well as the standard two-sample $t$-test.
The results are shown in the top panel of Figure \ref{fig:hyp_test}. 
As expected, the $p$-values from the two-sample $t$-test are all in the range of $10^{-20}$ to $10^{-40}$, providing no control for false positive findings.
This is because the naive $t$-test fails to take into account that the hierarchical clustering procedure always partitions data according to their observed values even under $H_0$, as noted by \cite{gao2022selective}.
In contrast, $p$-values derived from $\pmis$ and Gao et al.'s methods are both roughly uniformly distributed under $H_0$, suggesting well-controlled type I error rates.
Specifically, we find that $\pmis = 0.094$ corresponds to the cutoff at the 5\% $\alpha$ level, and the realized type I error rate is 4.8\%.
The type I error rate from the bootstrap procedure using $\pmis$ at the same control level is 1\%, which is more conservative.

Second, we examine the power of the $\pmis$-based hypothesis test to identify truly separated clusters. 
Specifically, we draw samples from two distinct Gaussian distributions, $N(\mu_1, \sigma^2)$ and $N(\mu_2, \sigma^2)$, where a range of $|\mu_1 - \mu_2|/\sigma$ values are selected from the set $\{1, 1.5, \dots, 7\}$. 
In each alternative scenario represented by a unique $|\mu_1 - \mu_2|/\sigma$ value, we draw 75 observations from each cluster distribution and compute the $p$-values using $\pmis$ as well as Gao et al.'s procedure. 
We calculate the power based on 500 replicates for each alternative scenario. 
As can be seen in the bottom panel of Figure \ref{fig:hyp_test}, $\pmis$ exhibits higher power than Gao et al.'s procedure for moderate-to-large degrees of cluster separation (defined by $|\mu_1 - \mu_2|/\sigma \in [2, 6]$).

\subsection{Real Data Applications} \label{sec:real_data}

\subsubsection{Cluster Analysis of Palmer Penguin Data}

We analyze the \texttt{penguins} data from the \texttt{palmerpenguins} package \citep{palmerpenguins}, which consists of bill, flipper, and mass measurements from three species of penguins in the Palmer Archipelago between 2007 and 2009. 
Following the analysis by \cite{gao2022selective}, we consider the subset of female penguins with complete data for bill and flipper length (both measured in millimeters), leaving us with 165 observations. Prior to performing the clustering, we center and scale the observations so that the measurements have zero mean and unit variance.
Following the procedure laid out in Section \ref{sec:kmeans_procedure}, we perform $k$-means clustering of the scaled observations into $K = 1, \dots, 8$ clusters and compute the gap statistic as well as $\pmis$ for each grouping of the observations. 
Figure \ref{fig:penguins_gap} shows the data used for clustering and the values of the gap statistic and $\pmis$ for different values of $K$.

For the $\pmis$ threshold of 0.05, the combined loss defined by the gap statistic and $\pmis$ is optimized at $K=3$.  
Both the visualization of the clustering data and the external species information suggest that the result is reasonable. We observe that the values of $\pmis$ in this example are strongly negatively correlated with other cluster validity indices \cite{rousseeuw1987silhouettes,lange2004stability,tibshirani2005cluster} to evaluate the $k$-means clustering partitions (Supplementary Table 2).
We repeat this analysis using hierarchical clustering based on the squared Euclidean distance with the Ward linkage and come to the same conclusions (Supplementary Figure 3 and Supplementary Table 3).

\begin{figure}[t!]
    \centering
    \includegraphics[]{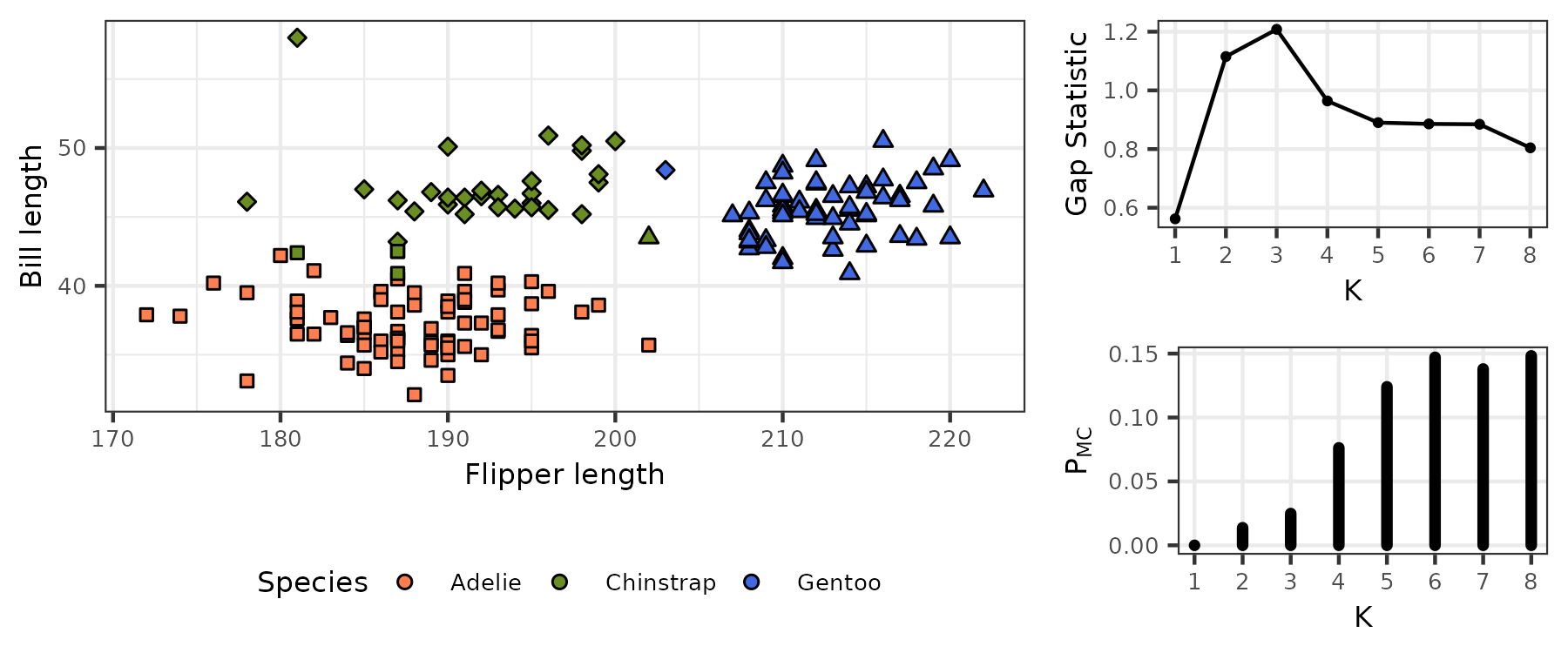}
    \caption{\textit{(Left)} Bill and flipper lengths for the subset of female penguins with complete data from the un-scaled \texttt{palmerpenguins} data. Color indicates species, while shape indicates the assigned $k$-means cluster at $K=3$. \textit{(Right)} Value of the gap statistic and $\pmis$ for different numbers of clusters based on the $k$-means clustering partition with Gaussian cluster distributions.}
    \label{fig:penguins_gap}
\end{figure}

\subsubsection{Inferring Population Structure from HGDP data} 

In this illustration, we apply the PHM algorithm to perform cluster analysis on the genetic data from the Human Genome Diversity Project (HGDP) \citep{cavalli2005human, conrad2006worldwide}, aiming to identify population structures. 
The dataset comprises 927 unrelated individuals sampled worldwide and genotyped at 2,543 autosomal SNPs. 
The geographic sampling locations are broadly divided into 7 continental groups: Europe, Central/South Asia (C/S Asia), Africa, Middle East, the Americas, East Asia, and Oceania.

Following the standard procedures for genetic data analysis, we pre-process the genotype matrix using principal component analysis (PCA) and select the first 5 PCs for our analysis based on the elbow point of the scree plot (Supplementary Figure 4). 
To apply the PHM algorithm, we fit a GMM to the dimension-reduced PC score matrix and select the model with 9 components based on the BIC. 
Each individual's posterior component assignment probability $\pi_k$ is shown in the distruct plot \cite{rosenberg2004distruct} in Figure \ref{fig:hgdp_plot}. 
Except for the Europe, Central/South Asia, and Middle East groups, the remaining continental groups tend to correspond to unique mixture components.

\begin{figure}[t!]
    \centering
    \includegraphics[]{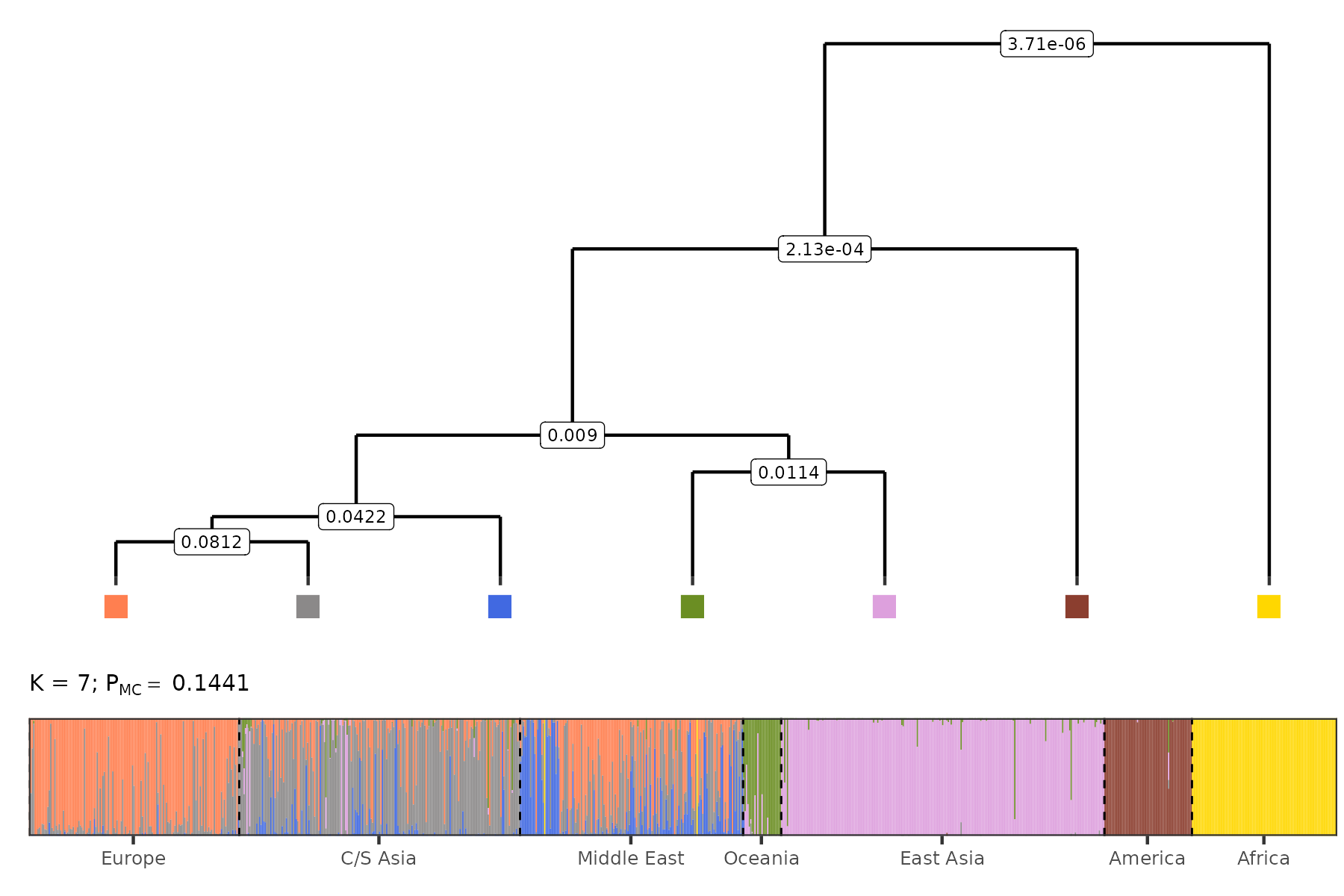}
    \caption{(\textit{Top}) Dendrogram visualizing the order of component merges in the PHM procedure. Numeric values indicate the reduction of $\pmis$ by the merge. Colors correspond to the mixture components from the distruct plot. (\textit{Bottom}) Distruct plot for posterior component assignment probability, with color indicating mixture components. Observations are grouped by the geographic region of their sampling location.}
    \label{fig:hgdp_plot}
\end{figure}

Starting with assigning each mixture component to its own cluster and an initial $\pmis = 0.1437$, we proceed with the steps of the PHM algorithm with $\tau = 0$, i.e., merging until all observations belong to a single cluster and $\pmis$ is decreased to 0.
Figure \ref{fig:hgdp_plot} visualizes the merging process as a dendrogram. The numbers in each node indicate the reduction of $\pmis$, i.e., $\Delta \pmis$, by combining the corresponding branches into a single cluster. 
A smaller value of $\Delta \pmis$ indicates that the clusters being combined are more distinct. 
Because the algorithm always prioritizes merging the most overlapping clusters, the merging sequence reflects the relative genetic dissimilarities between different clusters. This relationship can be straightforwardly interpreted from the dendrogram.
For example, the first two merges --- reducing $\pmis$ to 0.063 and 0.021, respectively --- form a cluster representing samples from Europe, the Middle East, and Central/South Asia, reflecting a close genetic relationship and noticeable genetic admixture among these population groups.
The general structure of the dendrogram can be roughly explained by the likely path of historical human migrations: from Africa into the Middle East, from the Middle East to Europe and Central/South Asia, from Central/South Asia to East Asia, and from East Asia to Oceania and the Americas. 
The overall pattern of human genetic diversity among the continental groups identified from our analysis is also corroborated by more sophisticated genetic analysis using additional information (e.g., haplotype analysis) \citep{conrad2006worldwide} and high-coverage genome sequencing data \citep{bergstrom2020insights}.

\subsubsection{Cluster Analysis of Single-cell RNA Sequence Data}

In this illustration, we apply the PHM algorithm to single-cell RNA sequencing (scRNA-seq) data from a sample of peripheral blood mononuclear cells to identify the different cell types. 
The data (sequenced on the Illumina NextSeq 500 and freely available from 10x Genomics) consist of gene expression counts for 2,700 single cells at 13,714 genes.

The raw sequence data are quality-controlled and pre-processed using standard procedures implemented in the \texttt{Seurat} package \citep{seuratpackage}, leaving us with 2,638 cells. 
Principal component analysis is subsequently performed on the normalized and scaled expression counts for dimension reduction. The first 10 components are selected based on the elbow point of the scree plot (Supplementary Figure 5).
The resulting 2,638 $\times$ 10 data matrix is used for our cluster analysis. 
Additionally, each cell is annotated as one of the following cell types: Na\"{i}ve CD4+ T cells, Memory CD4+ T cells, CD8+ T cells, Natural Killer (NK) cells, B cells, CD14+ monocytes, FCGR3A+ monocytes, Dendritic cells (DC), and Platelets, using known biomarkers. 
The annotated cell type information is not used in our cluster analysis procedure.

We fit a GMM on the dimension-reduced PC matrix and select the model with 9 components based on the optimal BIC. The posterior component assignment probability $\pi_k$ for each cell is visualized in Figure \ref{fig:seurat_plot}. With the exception of the Na\"{i}ve and Memory CD4+ T cells, each cell type corresponds to a unique mixture component.

Starting with each mixture component as its own cluster and an initial $\pmis = 0.0703$, we perform the merging procedure until all components have been combined into a single cluster. The merging process is represented by the dendrogram in Figure \ref{fig:seurat_plot}. 
We note that the pattern shown in the dendrogram has a striking similarity to the known immune cell differentiation trajectories. 
For example, the first merge combines the two types of CD4+ T cells, reducing $\pmis$ from 0.070 to 0.042.
Furthermore, CD8+ T cells, CD4+ T cells, NK cells, and B cells---all derived from Lymphoid progenitor cells---are grouped together in the dendrogram and separated from the branch consisting of FCGR4A+, CD14+ monocytes, and Dendritic cells---all of which are derived from Myeloid progenitor cells.

\begin{figure}[t!]
    \centering
    \includegraphics[]{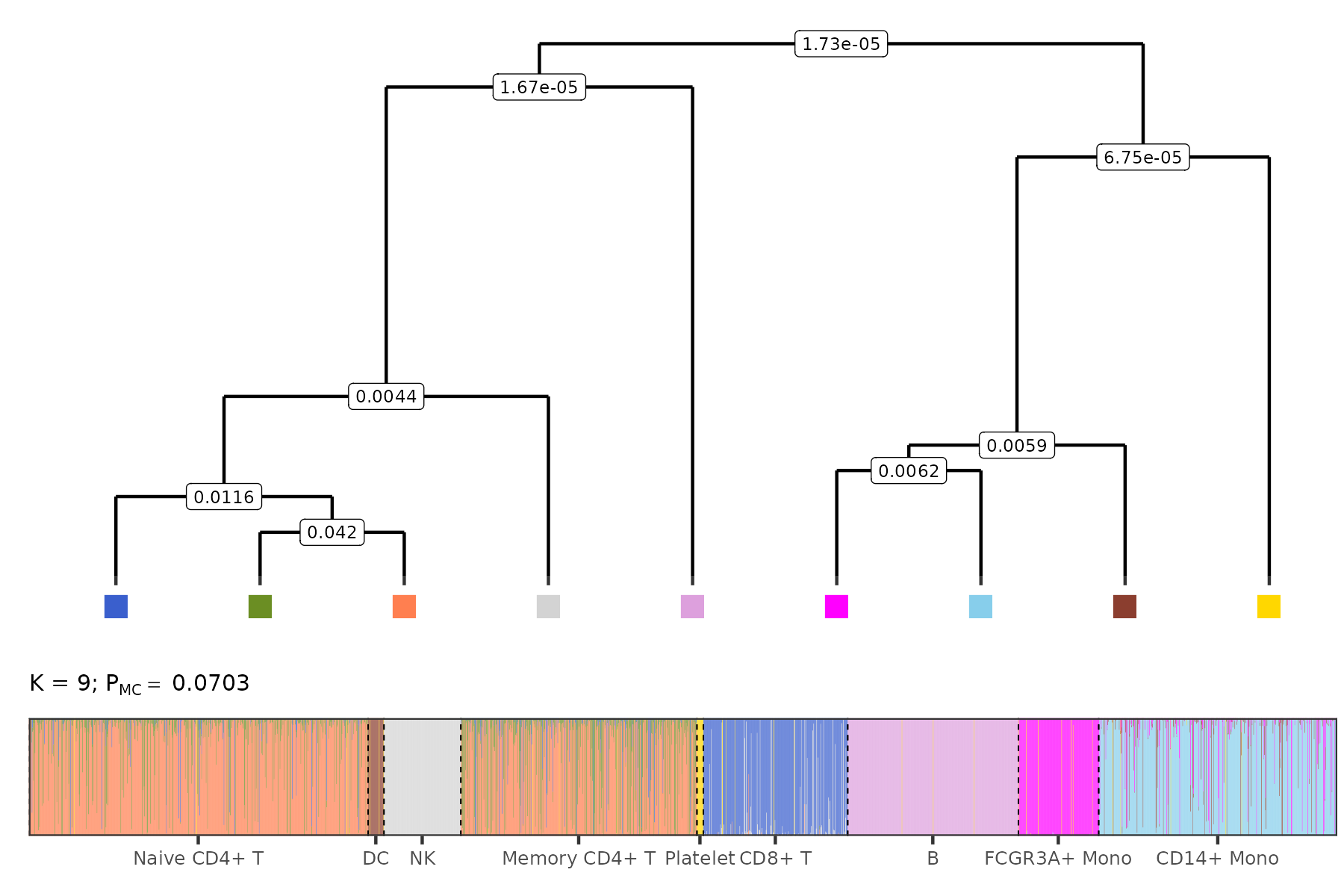}
    \caption{(\textit{Top}) Dendrogram visualizing the order of component merges in the PHM procedure. Numeric values indicate the reduction of $\pmis$ by the merge. Colors correspond to the mixture components from the distruct plot. (\textit{Bottom}) Distruct plot for posterior component assignment probability, with color indicating mixture components. Observations are grouped by annotated cell type.}
    \label{fig:seurat_plot}
\end{figure}

\section{Conclusion and Discussion}

In this work, we introduce the Distinguishability criterion, $\pmis$, to quantify the separability of clusters inferred from cluster analysis procedures.
We discuss the intuition behind the criterion, as well as the derivation and properties of $\pmis$.
We propose a combined loss function-based computational framework that integrates the Distinguishability criterion with available model and heuristics-based clustering algorithms and demonstrate its use with synthetic and real data applications.

The proposed $\pmis$ is an internal clustering validity index to assess the separability of the clustering results, with unique advantages over alternative validity indices. 
Since $\pmis$ is measured on the probability scale, the threshold for our proposed constrained optimization problem is interpretable. 
Additionally, $\pmis$ measurements are directly comparable across different datasets and various types of clustering applications,  enabling future work on assessing the replicability of clustering analysis.

While our numerical illustrations primarily use Gaussian or mixtures of Gaussian distributions to evaluate $\pmis$, it is important to highlight the proposed computational framework's flexibility and compatibility with any valid parametric likelihood function. 
As a result, the applications of $\pmis$ can be extended to a more diverse class of latent variable models, e.g., latent Dirichlet allocation (LDA) and generalized factor analysis models, in order to help address the similar model selection problems that arise with their use.
We will explore these extensions in our future work.

The properties of $\pmis$ are best utilized in our proposed PHM algorithm, which is motivated by Baudry et al.'s entropy criterion-based merging procedure \cite{Baudry_2010}.
By combining mixture components into clusters, both algorithms enjoy excellent model fit and interpretability for the inferred clusters. 
However, the constrained optimization formulation described above leads to a more interpretable stopping rule for the PHM algorithm compared to Baudry et al.'s procedure.
Furthermore, the PHM algorithm shows improved computational efficiency---due to the cluster merging property of $\pmis$, the complexity of the PHM algorithm is invariant to the sample size of the observed clustering data, making it more suitable for analyzing large-scale data.
Finally, we note that the dendrogram visualizing the complete merging procedure can be used to represent a coalescent process that has many applications across scientific disciplines, such as developmental biology, human genetics, and evolutionary biology. 
One possible future research direction is shifting the focus from cluster analysis to uncovering the underlying coalescent trees by incorporating more context-specific information, with the PHM algorithm as a natural starting point.

It is also possible to extend the PHM algorithm to work with a general class of hard clustering algorithms that output optimal partitions of observed data.
One strategy is to treat each output partition as a distinct population sample and estimate its distribution using a finite mixture model. We can then re-normalize the mixture proportions globally and initilize the clusters at the level of the output partitions, at which point applying the PHM algorithm is straightforward.
This simple strategy extends the applications of the Distinguishability criterion to a more diverse class of clustering algorithms, including density-based and graph clustering algorithms.

\section{Methods}

\subsection{Overview of Clustering Methods}

We briefly review existing clustering methods, focusing on the algorithms used in this paper. 
The available clustering methods can be roughly classified into two categories: heuristics-based clustering methods, represented by the $k$-means and hierarchical clustering algorithms, and model-based clustering methods, represented by finite-mixture models. 
The heuristics-based clustering methods typically do not make explicit distributional assumptions and instead perform hard clustering by outputting optimal partitions of the observed data based on their corresponding objective functions. 
The model-based clustering methods perform formal statistical inference of the latent cluster structures underlying the observed data.

\subsubsection{$k$-means clustering}

$k$-means clustering is a widely used heuristics-based clustering algorithm for partitioning a sample into clusters. 
As the name suggests, this procedure identifies clusters by their centroids (defined as the mean of all points in the cluster) and assigns observations to the cluster with the nearest centroid. More formally, for a specified number of clusters $K$, the algorithm groups the observations into disjoint sets $\mathcal{C}_1, \dots, \mathcal{C}_K$ to minimize the distortion function, i.e., 

\begin{equation*}
\min_{\mathcal{C}_1, \dots, \mathcal{C}_K} \sum_{k=1}^K \sum_{i \in \mathcal{C}_k} \bigg | \bigg | \xv_i - \sum_{j \in \mathcal{C}_k} \xv_j / |\mathcal{C}_k| \bigg | \bigg |_2^2,
\end{equation*}

which corresponds to minimizing the within-cluster variances. The clustering is usually performed using Lloyd's algorithm \cite{lloyd1982least}, which alternates between assigning observations to the cluster with the nearest centroid and updating the cluster centroid to eventually arrive at a locally optimal solution. 

It has been shown that the $k$-means algorithm is equivalent to an approximate maximum likelihood procedure, where the underlying probability model assumes a multivariate Gaussian distribution for each latent cluster \cite{Bock_1996,Fraley_2002}.
Hence, the observation frequency of each cluster and the corresponding parameters for the Gaussian distribution can be estimated straightforwardly using the partitioned output from the $k$-means algorithm. 

\subsubsection{Hierarchical clustering}

Hierarchical clustering produces a sequence of partitions for observations. In the commonly used agglomerative hierarchical clustering procedure, each observation comprises its own cluster in the initial partition. Subsequent clusterings in the sequence are formed by repeatedly combining the most similar pair of groups of observations into a single group until all observations have been combined into a single cluster.
The similarity between groups of observations in hierarchical clustering is typically defined by a distance measure between pairs of observations, such as the squared Euclidean distance, in addition to a function that generalizes this similarity to groups of observations (referred to as a linkage function). Commonly used linkage functions between groups of observations are single linkage, complete linkage, average linkage, and the Ward linkage.

The hierarchical clustering algorithm is also often connected to probability models assuming a Gaussian data distribution underlying each cluster \cite{Mclachlan_book, Fraley_2002}, and explicit Gaussian assumptions are commonly used for model-selection or post-selection inference in hierarchical clustering \cite{Kimes_2017, gao2022selective, Grabski_2023}.  

\subsubsection{Model-based Clustering}

Finite mixture models are the most representative approach for model-based clustering, where a mixture distribution with finite components models the observed clustering data.
Traditionally, each mixture component is taken to correspond to a homogeneous subpopulation or cluster. 
The goal of inference in the mixture model setting is to estimate the characteristics of each mixture component, i.e., $p(\xv \mid \theta = k)$, and the corresponding mixture proportion, i.e., $\alpha_k$.
The expectation-maximization (EM) algorithm is commonly used to find maximum likelihood estimates of the parameters of interest. 
The Gaussian mixture model (GMM), which models each mixture component using a unique Gaussian distribution, is probably the most commonly used mixture model in practice.
This is because, among other reasons, the GMM is considered to be a universal approximator \citep{Goodfellow_book} and is flexible enough to fit diverse data types.

Unlike most heuristics-based clustering approaches, model-based clustering algorithms perform soft (or fuzzy) clustering, as they do not directly partition the observed data. 
Instead, every data point has an associated (posterior) probability distribution over all possible cluster assignments.
A post-hoc classification procedure, with a pre-specified decision rule using the cluster assignment probabilities, can be applied to partition the observed data.

For a more thorough review of model-based clustering, we refer the reader to \cite{Fraley_2002, mcnicholas2016model, Gormley_2023}

\subsection{Derivation and Estimation of $\pmis$}  

The Bayes risk for a general classifier $\delta(\xv): \mathbb{R}^p \mapsto \{1, \dots, K\}$ assigning a point $\xv$ to one of the $K$ clusters can be derived as follows,

\begin{equation} \label{eq:pmis_form} 
    \begin{aligned}
    \pmis & = \mathbb{E}_{\xv} \Big[ 
        \mathbb{E}_\theta \big[ 
            L(\delta(\xv), \theta)
            \mid \xv
        \big] \Big]  \\ 
        & =  \int \left( \sum_{j=1}^K \sum_{i \neq j} \pi_i(\xv) \, \Pr( \delta(\xv) = j \mid \xv) \right)  P(d\xv) \\
        & =  \int \left( \sum_{j=1}^K  (1 - \pi_j(\xv)) \, \Pr( \delta(\xv) = j \mid \xv) \right)  P(d\xv)
    \end{aligned}
\end{equation}

For a more detailed derivation, see Appendix \ref{app:pmis_derivation}. The marginal data distribution $P(\xv)$, with loose notation, is given by

\begin{equation*}
P(\xv) = \sum_{k=1}^K \Pr(\theta=k) \, p(\xv \mid \theta = k) = \sum_{k=1}^K \alpha_k \, p(\xv \mid \theta = k). 
\end{equation*}

For the default randomized decision rule, $\delta_r(\xv) \sim \mbox{Categorical}\, (\piv (\xv))$,

\begin{equation}
    \Pr(\delta(\xv) = j \mid \xv) = \pi_j(\xv).
\end{equation}

Thus,

\begin{equation} \label{eq:pmis_rand_defn}
\begin{aligned} 
    P_{{\rm mc}, \,\delta_r} &= \int \left( \sum_{j=1}^K  \pi_j(\xv) (1-\pi_j(\xv)) \right) P(d\xv) \\
     & = 2 \sum_{i<j} \int  \pi_i(\xv) \pi_j(\xv)  P(d\xv) 
\end{aligned}
\end{equation}

To compute $\pmis$ using the optimal decision rule, $\delta_o(\xv) = \arg\max_k \pi_k(\xv)$, we define a partition of the sample space, $\cup_{k=1}^K \mathcal{R}_k$, such that $\mathcal{R}_k := \{\xv: \delta_o(\xv) = k\}$. 
It follows that, 

\begin{equation}
\Pr(\delta(\xv) =j \mid \xv) = \boldsymbol{1}\{\xv \in \mathcal{R}_j\},
\end{equation}

and  

\begin{equation}
\begin{aligned}  \label{eq:pmis_optimal_defn}
    P_{{\rm mc}, \,\delta_o} & = \int \left( \sum_{j=1}^K \sum_{i \neq j}  \pi_i(\xv) \, \boldsymbol{1}\{\xv \in \mathcal{R}_j\} \right) P(d\xv) \\
                            & = \int \left(1 - \max_k \pi_k(\xv)\right) \, P(d \xv)
\end{aligned}
\end{equation}

Note that, 

\begin{equation}
    \sum_{j=1}^K  \pi_j(\xv) (1-\pi_j(\xv)) \ge \sum_{j=1}^k \pi_j(\xv) (1-\max_k \pi_k(\xv))  = 1 - \max_k \pi_k(\xv),~ \forall \xv
\end{equation}

Hence, 

\begin{equation*}
    P_{{\rm mc}, \,\delta_r}  \ge P_{{\rm mc}, \,\delta_o}
\end{equation*}

Unless otherwise specificied, we use the notation $\pmis$ to refer to $P_{{\rm mc}, \,\delta_r}$ by default.
In this paper, we estimate $\pmis$ by plugging in the point estimates of the $\alpha_k$'s as well as the key distributional parameters in the corresponding likelihood functions obtained from the observed data.

\subsection{Lower and Upper Bounds of $\pmis$}

When all clusters are well-separated, $\pmis$ approaches its lower bound at 0. 
More specifically, assuming well-separated clusters, both of the following conditions should hold:

 \begin{align*}
    & \pi_i(\xv)\, \pi_j(\xv) \to  0, ~ \forall \, \xv \mbox{ and } (i ,j) \mbox{ pairs}, \\
   \intertext{and} 
    & \max_k \pi_k(\xv) \to 1, ~\forall \, \xv.
\end{align*}

Hence, both decision rules ($\delta_r$ and $\delta_o$) approach perfect classification accuracy.

$\pmis$ is maximized when all clusters are completely overlapping, i.e., 

\begin{equation}
    p(\xv \mid \theta = i) = p(\xv \mid \theta = j)  ~ \forall \, \xv \mbox{ and } (i ,j) \mbox{ pairs}. 
\end{equation}

Thus, $\pi_k(x) = \alpha_k, ~ \forall \xv$.
It follows that 

\begin{equation*}
    \max P_{{\rm mc}, \,\delta_r} = \sum_{k=1}^K \alpha_k \, (1 - \alpha_k),
\end{equation*}

and 

\begin{equation*}
    \max P_{{\rm mc}, \,\delta_o} = 1 - \max_k \alpha_k.
\end{equation*}

In the special case that $\alpha_k = 1/K$ for all $k$ values,

\begin{equation}
    \max P_{{\rm mc}, \,\delta_r} = \max P_{{\rm mc}, \,\delta_o} = \frac{K-1}{K}.
\end{equation} 

\subsection{The Cluster Merging Property of $\pmis$}

The merging property is specific to the default $\pmis$, evaluated using the randomized decision rule $\delta_r$.
For a given cluster configuration with $K \ge 2$, consider merging two arbitrary clusters to form a new combined cluster.
Let $\pmis$ and $\pmis^\dagger$ denote the misclassification probabilities before and after the merging, respectively. 
The following proposition summarizes the merging property.

\begin{prop} \label{prop:pmis_merge}
   Merging two existing clusters indexed by $i$ and $j$ leads to  
   \begin{equation*} 
    \Delta \pmis^{(i,j)} := \pmis - \pmis^\dagger  = 2 \int \pi_i (\xv) \pi_j (\xv) \, P(d\xv) \ge 0.
   \end{equation*} 
   Furthermore, 
   \begin{equation*}
     \pmis = \sum_{i<j} \Delta \pmis^{(i,j)}
   \end{equation*}
\end{prop}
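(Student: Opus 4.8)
The plan is to establish the two identities separately, with the first (the $\Delta\pmis^{(i,j)}$ formula) doing most of the work and the second (the decomposition $\pmis = \sum_{i<j}\Delta\pmis^{(i,j)}$) following by a short combinatorial argument from the closed form already derived in equation~(\ref{eq:pmis_rand_defn}).

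For the first part, I would start from the pairwise expression $P_{{\rm mc},\delta_r} = 2\sum_{a<b}\int \pi_a(\xv)\pi_b(\xv)\,P(d\xv)$ and track exactly which terms change when clusters $i$ and $j$ are merged into a new cluster $k'$. The key observations are that merging does not alter the marginal $P(\xv)$ (since $\alpha_{k'}p(\xv\mid\theta=k') = \alpha_i p(\xv\mid\theta=i) + \alpha_j p(\xv\mid\theta=j)$), that the posterior for the merged cluster is $\pi_{k'}(\xv) = \pi_i(\xv) + \pi_j(\xv)$, and that the posteriors $\pi_\ell(\xv)$ for all uninvolved clusters $\ell \notin \{i,j\}$ are unchanged. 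In the post-merge sum, every pair $(a,b)$ with $a,b \notin\{i,j\}$ contributes identically to before; each pair $(k',\ell)$ with $\ell\notin\{i,j\}$ contributes $\int \pi_{k'}\pi_\ell = \int(\pi_i+\pi_j)\pi_\ell$, which is exactly the sum of the old $(i,\ell)$ and $(j,\ell)$ contributions; and the pair $(i,j)$ simply disappears. Hence $\pmis - \pmis^\dagger = 2\int\pi_i(\xv)\pi_j(\xv)\,P(d\xv)$, and nonnegativity is immediate since the integrand is a product of nonnegative functions against a probability measure.

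For the second identity, I would induct on $K$, or more directly invoke the first part in a telescoping fashion: repeatedly merging all clusters down to a single cluster drives $\pmis$ to $0$ (a single cluster has $\pi_1(\xv)\equiv 1$, so its misclassification probability vanishes), and each merge step subtracts exactly one $\Delta\pmis^{(i,j)}$ term. The subtlety is that after merges the ``effective'' $\Delta$ values combine — e.g. the update rule $\Delta\pmis^{(k',\ell)} = \Delta\pmis^{(i,\ell)} + \Delta\pmis^{(j,\ell)}$ from Algorithm~\ref{alg:pmis_gmm_algorithm} — so one must check that the total subtracted over any complete merging sequence equals $\sum_{i<j}\Delta\pmis^{(i,j)}$ computed at the original configuration, independent of merge order. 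Cleanest is to bypass the induction entirely: directly expand $\sum_{i<j}\Delta\pmis^{(i,j)} = \sum_{i<j}2\int\pi_i\pi_j\,P(d\xv)$ and recognize this is precisely the closed form~(\ref{eq:pmis_rand_defn}) for $\pmis$ itself.

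The main obstacle, such as it is, lies in the bookkeeping for the first part: being careful that the merge genuinely leaves $P(\xv)$ and all uninvolved $\pi_\ell$ invariant, and that no cross terms are double-counted when the pair $(i,j)$ is replaced by $(k',\ell)$ pairs. Once that accounting is pinned down, both identities are essentially immediate consequences of the bilinear structure of~(\ref{eq:pmis_rand_defn}) in the posterior probabilities.
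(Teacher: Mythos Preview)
Your proposal is correct and mirrors the paper's own proof almost exactly: the paper likewise splits the pairwise sum in~(\ref{eq:pmis_rand_defn}) into the three groups (uninvolved pairs, pairs involving $i$ or $j$ with an uninvolved cluster, and the $(i,j)$ pair), uses $\pi_{k'}=\pi_i+\pi_j$ to collapse the middle group, and then obtains the second identity by directly plugging the $\Delta\pmis^{(i,j)}$ formula back into~(\ref{eq:pmis_rand_defn}). Your explicit remark that the marginal $P(\xv)$ is unchanged by the merge is a useful clarification the paper leaves implicit.
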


\begin{proof}
    Appendix \ref{app:merge_property}.
\end{proof}

The cluster merging property forms the basis of the PHM algorithm. As $\Delta \pmis^{(i,j)}$ can be pre-computed for all pairs of clusters from the initial configuration, the subsequent updates for $\pmis$---merging one pair of clusters at a time---become straightforward to compute.

\subsection{Numerical Evaluation of $\pmis$}

Evaluating $\pmis$ numerically can be challenging, especially when clustering data are high-dimensional.
For low-dimensional data, it is possible to evaluate Eqn (\ref{eq:pmis_form}) by numerical integration using various quadrature methods. 
However, they generally do not scale well when the clustering data dimensionality becomes larger than 5. 
When the marginal data distribution, $P(\xv)$, can be directly sampled from (as in the case in all examples presented in this paper), Monte Carlo (MC) integration becomes an efficient solution. 
Specifically, we sample $M$ data points from $P(\xv)$ and approximate $\pmis$ by

\begin{equation}
    \hat P_{{\rm mc}} = \frac{1}{M} \sum_{i=1}^{M} \left( \sum_{j=1}^K  \bigg(1 - \pi_j(\xv_i) \bigg) \, \Pr( \delta(\xv_i) = j \mid \xv) \right)
    \label{eq:pmis_montecarlo}
\end{equation}

The unique advantage of the Monte Carlo integration method is that its error bound is always $O(1/\sqrt{M})$ regardless of the dimensionality of $\xv$. 
We provide comparisons between the Monte Carlo method and the numerical integration for evaluating $\pmis$ in some low-dimensional settings (Supplementary Method 1 and Supplementary Table 4), indicating that the MC integration method is accurate and efficient. 

\clearpage

\bibliographystyle{unsrt}
\bibliography{references} 

\clearpage

\begin{appendices}
    
    \section{Derivation of $\pmis$}  \label{app:pmis_derivation}
            
    To compute the Bayes risk for a general classifier $\delta(\xv)$ under the 0-1 loss, we first evaluate its posterior expected loss $\mathbb{E}_\theta \big[ L(\delta(\xv), \theta) \mid \xv \big]$ as follows: 
    \begin{align*}
            \mathbb{E}_\theta \big[ L(\delta(\xv), \theta) \mid \xv \big] & = \Pr( \theta \ne \delta(\xv) \mid \xv) \\
                    & = \sum_{j=1}^K \Pr(\delta(\xv) =j, \, \theta \ne j \mid  \xv) \\
                    & = \sum_{j=1}^K \Pr( \theta \ne j  \mid \,  \xv) \, \Pr(\delta(\xv) = j \mid \xv) \\
                    & =  \sum_{j=1}^K \sum_{i \ne j} \Pr(\theta = i \mid \, \xv ) \, \Pr(\delta(\xv) = j \mid \xv)  \\
                    & = \sum_{j=1}^k \sum_{i \ne j} \pi_i(\xv)\,   \Pr(\delta(\xv) = j \mid \xv) \\
                    & =  \sum_{j=1}^k  (1-\pi_j(\xv)) \,   \Pr(\delta(\xv) = j \mid \xv) 
    \end{align*}
    
    Note that $\Pr(\theta \ne j \mid \xv, \delta(\xv)) = \Pr(\theta \ne j \mid \xv)$.
    Subsequently,
    \begin{equation}
        \begin{aligned} 
                \pmis & = \mathbb{E}_{\xv} \Big[ 
                    \mathbb{E}_\theta \big[ 
                        L(\delta(\xv), \theta)
                        \mid \xv
                    \big] \Big] \nonumber \\
                    & =  \int \left( \sum_{j=1}^K \sum_{i \neq j} \pi_i(\xv) \, \Pr( \delta(\xv) = j \mid \xv) \right)  P(d\xv) \\
                    & =  \int \left( \sum_{j=1}^K  (1 - \pi_j(\xv)) \, \Pr( \delta(\xv) = j \mid \xv) \right)  P(d\xv) 
            \end{aligned}
    \end{equation}
            
    \section{Proof of Proposition 1} \label{app:merge_property}
    
    \begin{proof}
    Consider merging two existing clusters $i,j$ to a new combined cluster $k'$. It follows that

    \begin{align*}
        & \alpha_{k'} = \alpha_{i} + \alpha_{j} \\
        \intertext{and}
        & p( \xv \mid \theta=k) = \frac{\alpha_i \, p( \xv \mid \theta=i) + \alpha_j \, p( \xv \mid \theta=j)} {\alpha_{k'}}   
    \end{align*}
    
    Consequently, by applying Bayes rule,
    
    \begin{equation} \label{eqn.comb_pi}
        \pi_{k'}(\xv) = \pi_{i}(\xv) + \pi_{j}(\xv)
    \end{equation}
            
    Let $S$ denote the set of indices of the existing clusters not impacted by the merge, where $|S| = K-2$. By Eqn (\ref{eq:pmis_rand_defn}), $\pmis$ can be written as 

    \begin{equation*}
        \begin{aligned}
            \pmis & = 2 \sum_{m,n \in S, \, m < n} \int \pi_m (\xv) \pi_n (\xv) P(d\xv)  \\
                & + 2 \sum_{l \in S} \int \pi_l (\xv) \pi_i (\xv) P(d\xv) +  2 \sum_{l \in S} \int \pi_l (\xv) \pi_j (\xv) P(d\xv) \\
                & + 2 \int \pi_i (\xv) \pi_j (\xv) P(d\xv)  
        \end{aligned}
    \end{equation*} 

    By Eqn (\ref{eqn.comb_pi}),
    \begin{equation*}
        2 \sum_{l \in S} \int \pi_l (\xv) \pi_i (\xv) P(d\xv) +  2 \sum_{l \in S} \int \pi_l (\xv) \pi_j (\xv) P(d\xv) = 2 \sum_{l \in S} \int \pi_l(\xv) \pi_{k'} (\xv)  P(d\xv) 
    \end{equation*}

    and note that, 

    \begin{equation*}
        \pmis^\dagger = 2 \sum_{m,n \in S, \, m < n} \int \pi_m (\xv) \pi_n (\xv) P(d\xv)  +  2 \sum_{l \in S} \int \pi_l(\xv) \pi_{k'} (\xv)  P(d\xv) 
    \end{equation*}

    It becomes evident that 
        \begin{equation}
            \Delta \pmis^{(i,j)} = \pmis - \pmis^\dagger = 2 \int \pi_i (\xv) \pi_j (\xv) P(d\xv) \ge 0 
        \end{equation}

        Plugging in the expression of $\Delta \pmis^{(i,j)}$ into Eqn  (\ref{eq:pmis_rand_defn}) yields
        \begin{equation}
            \pmis = \sum_{i<j} \Delta \pmis^{(i,j)} 
        \end{equation}            
    \end{proof}
            
    \paragraph{Remark} The merging property is specific to the randomized decision rule $\delta_r$ under the 0-1 loss. For the optimal decision rule, $\delta_o$, it can be shown that $\pmis^\dagger \le \pmis$ after merging a pair of existing clusters. 
    However, the quantitive expression for $\Delta \pmis^{(i,j)}$ is analytically intractable. 
            
    \section{$\pmis$ Dendrogram Construction}\label{app:dendrogram}
            
    We use a dendrogram to visualize the PHM procedure described in Algorithm \ref{alg:pmis_gmm_algorithm}. The leaf nodes in the tree correspond to the individual components of the mixture model (MM) used to fit the data.
    The edges between parent and child nodes correspond to a cluster merge step; we present the $\Delta \pmis$ reduction from the merge on the parent node for the merged clusters.
    In this way, it is possible to determine the value of $\pmis$ after each successive merge by subtracting the cumulative $\Delta \pmis$ values from the leaf nodes of the tree up to the height of a specific merge from the $\pmis$ of the initial cluster configuration.
            
    The height of a merge in the tree is determined as follows. 
    Let $\pmis^0$ denote the $\pmis$ value at the initial cluster configuration and, for a given merge, let $\pmis^{\ddagger}$ be the value of the criterion \textit{prior} to that merge taking place.
    We place the merge in the dendrogram at a height corresponding to the $\log_{10}$ scaled ratio of these values, i.e., $\log_{10} \pmis^0 / \pmis^{\ddagger}$.
    The $\log_{10}$ transformation prevents the merges from early on in the procedure from being ``squashed" to the bottom of the tree.
    We opt to use the $\pmis$ value before the merge occurs rather than after to avoid dividing by zero when calculating the height of the final merge (which would result in $\pmis = 0$). 

    \end{appendices}

\clearpage

\setcounter{equation}{0}
\setcounter{figure}{0}
\setcounter{table}{0}
\setcounter{page}{1}
\setcounter{section}{0}

\newcommand{\header}[1]{%
    \begin{center}
    \LARGE{\textbf{#1}}
    \end{center}
}

\renewcommand{\figurename}{Supplementary Figure}
\renewcommand{\tablename}{Supplementary Table}

\header{Supplementary Figures}

\newpage

\begin{figure}[ht!]
    \centering
    \includegraphics[]{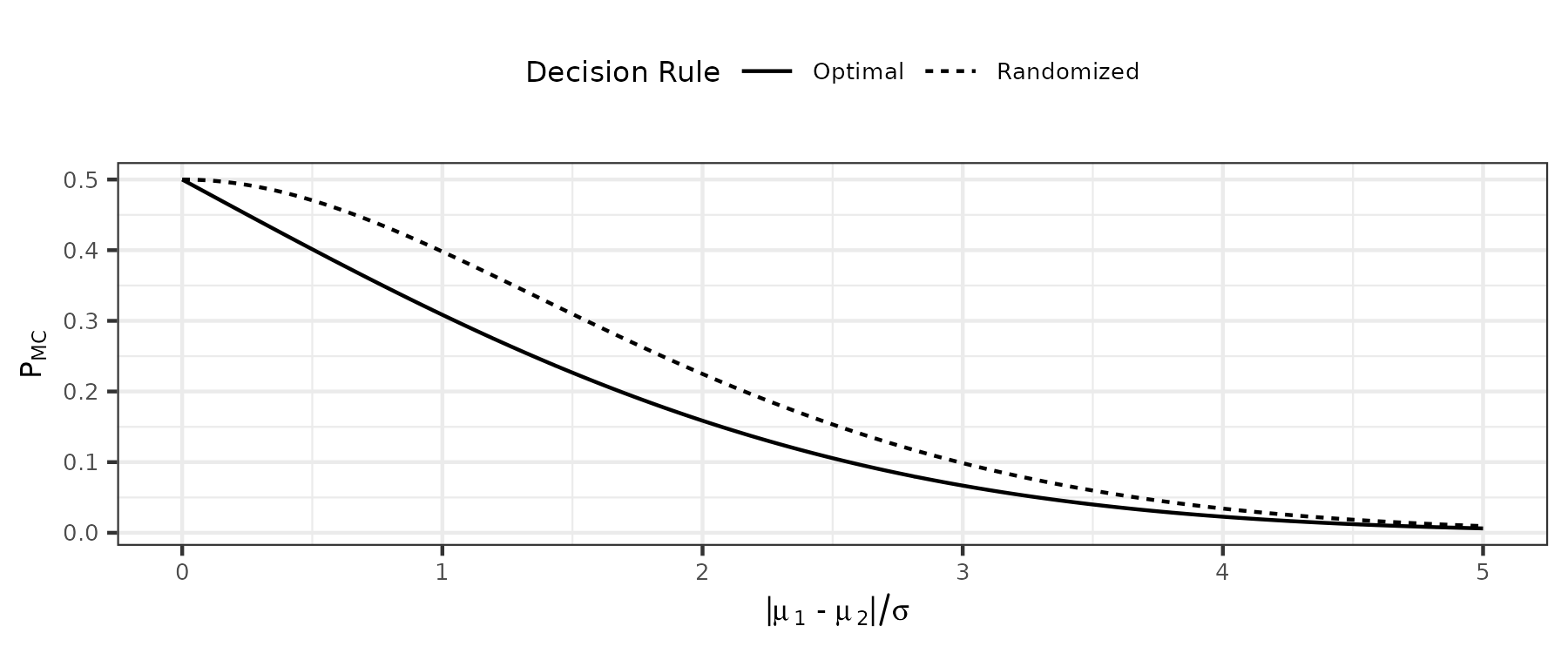}
    \caption{Values of $\pmis$ based on the randomized and optimal decision rules $\delta_r$ and $\delta_o$. The value $\pmis$ is shown in the y-axis and is calculated for two univariate Gaussian distributions $N(\mu_1, \sigma)$ and $N(\mu_2, \sigma)$ where $\pi_1 = \pi_2 = 0.5$. The x-axis indicates the degree of cluster separation in terms of the distribution parameters.}
    \label{decison_rules}
\end{figure}

\newpage

\begin{figure}[ht!]
    \centering
    \includegraphics[]{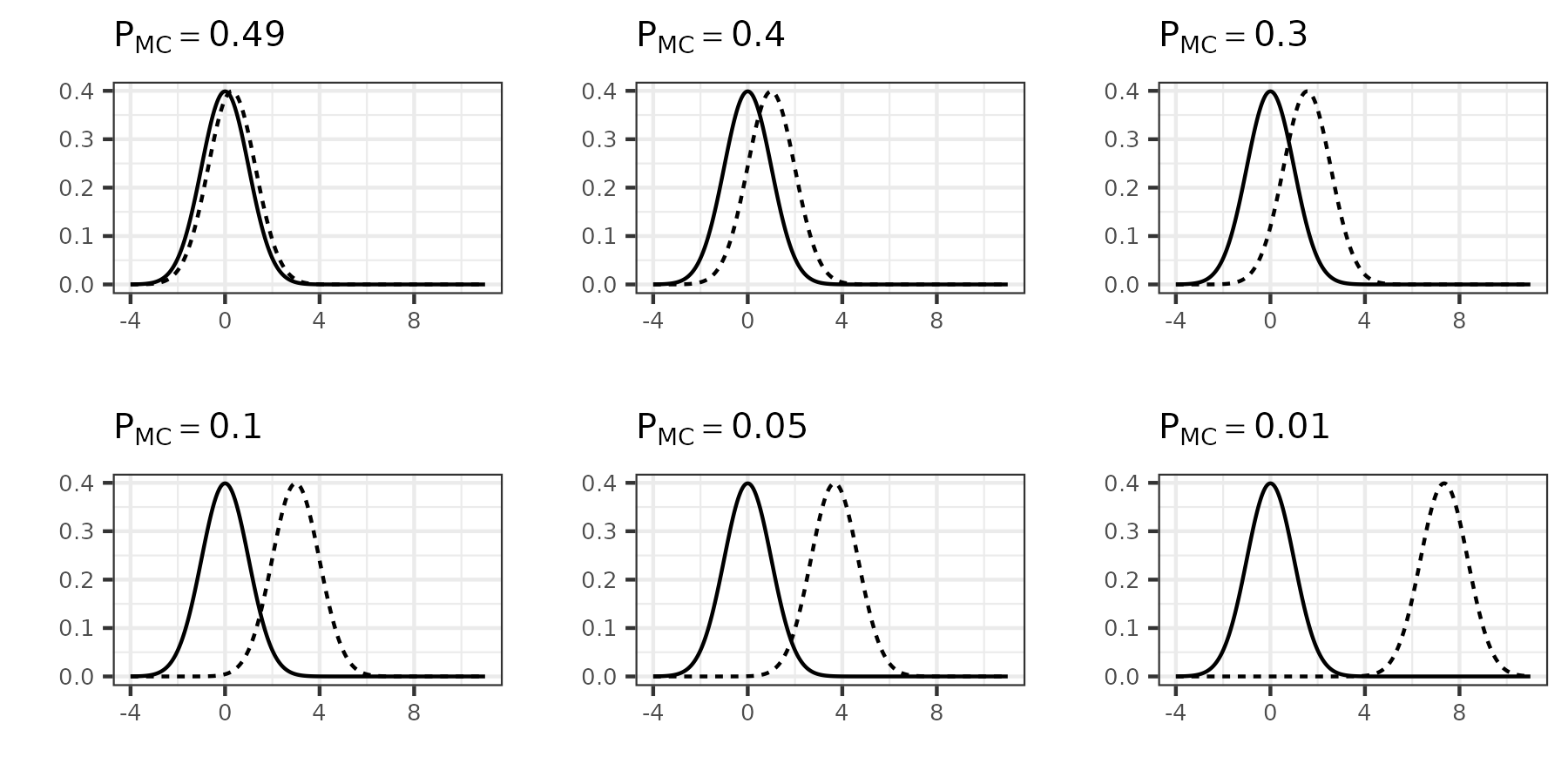}
    \caption{Distribution plots for two univariate Gaussian distributions $N(0, 1)$ (solid line) and $N(\mu, 1)$ (dashed line) at decreasing values of $\pmis$, where $\pi_1 = \pi_2 = 0.5$. The distance between the two centroids, $|\mu|$, determines the specific $\pmis$ value.}
    \label{pmis_example_distributions}
\end{figure}

\newpage

\begin{figure}[ht!]
    \centering
    \includegraphics[]{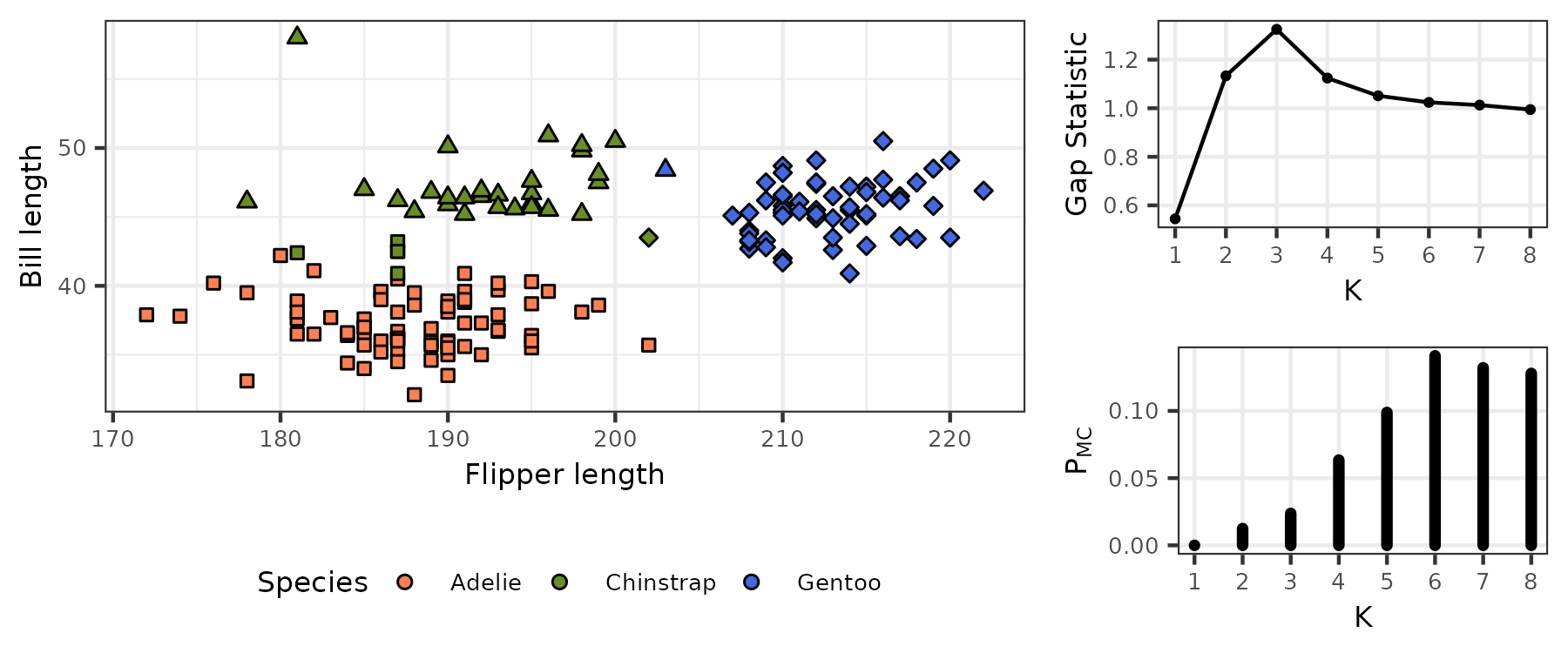}
    \caption{(\textit{Left}) Bill and flipper lengths for the subset of \texttt{palmerpenguins} data. Color indicates species while shape indicates the assigned hierarchical clustering partition. (\textit{Right}) Value of the gap statistic and $\pmis$ based on hierarchical clustering for different numbers of clusters with Gaussian cluster distributions.}
    \label{fig:penguins_hclust}
\end{figure}

\newpage

\begin{figure}[ht!]
    \centering
    \includegraphics[]{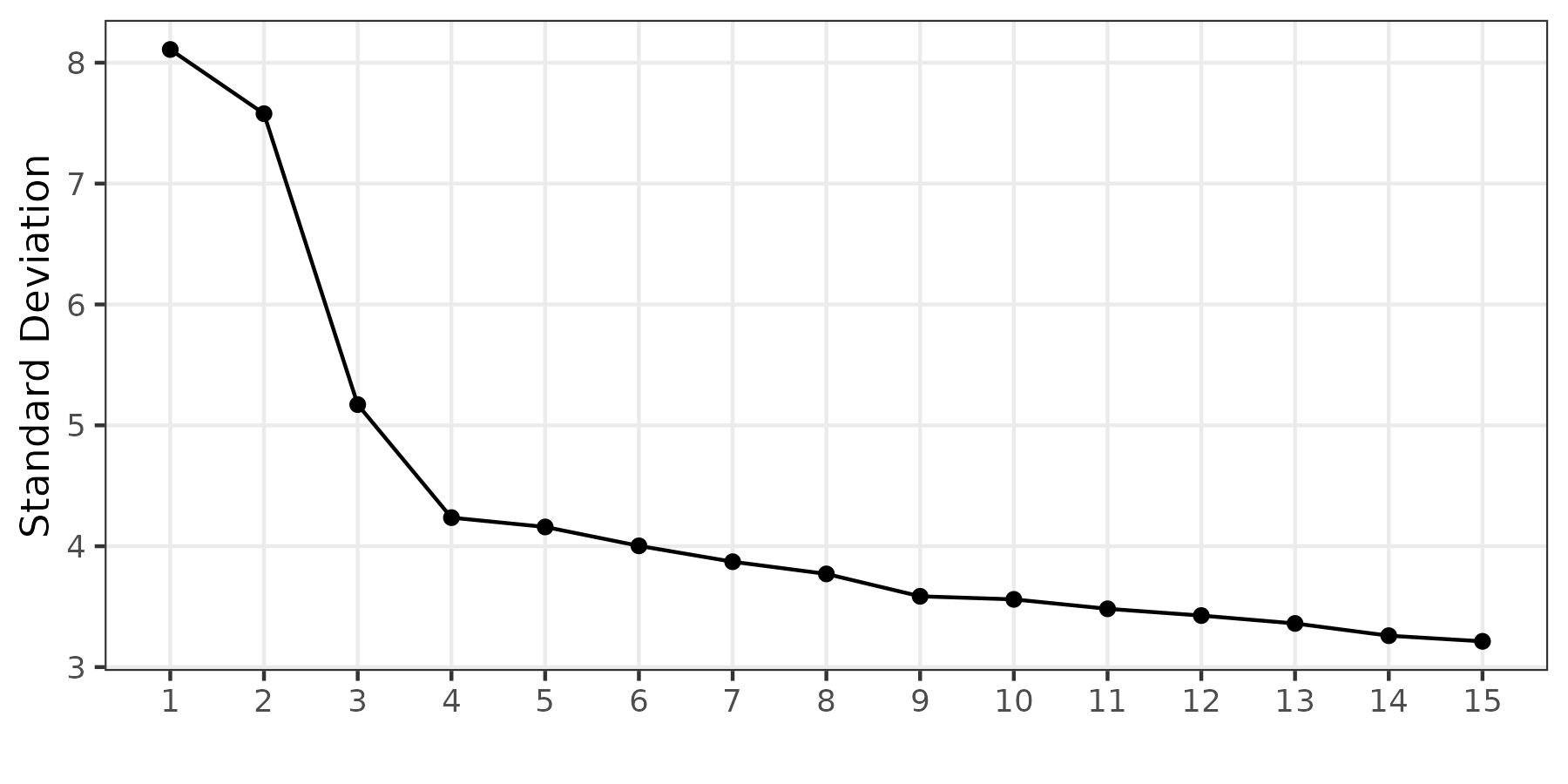}
    \caption{Scree plot visualizing standard deviation captured by each of of the principal component vectors from the HGDP data.}
    \label{fig:hgdp_screeplot}
\end{figure}

\newpage

\begin{figure}[ht!]
    \centering
    \includegraphics[]{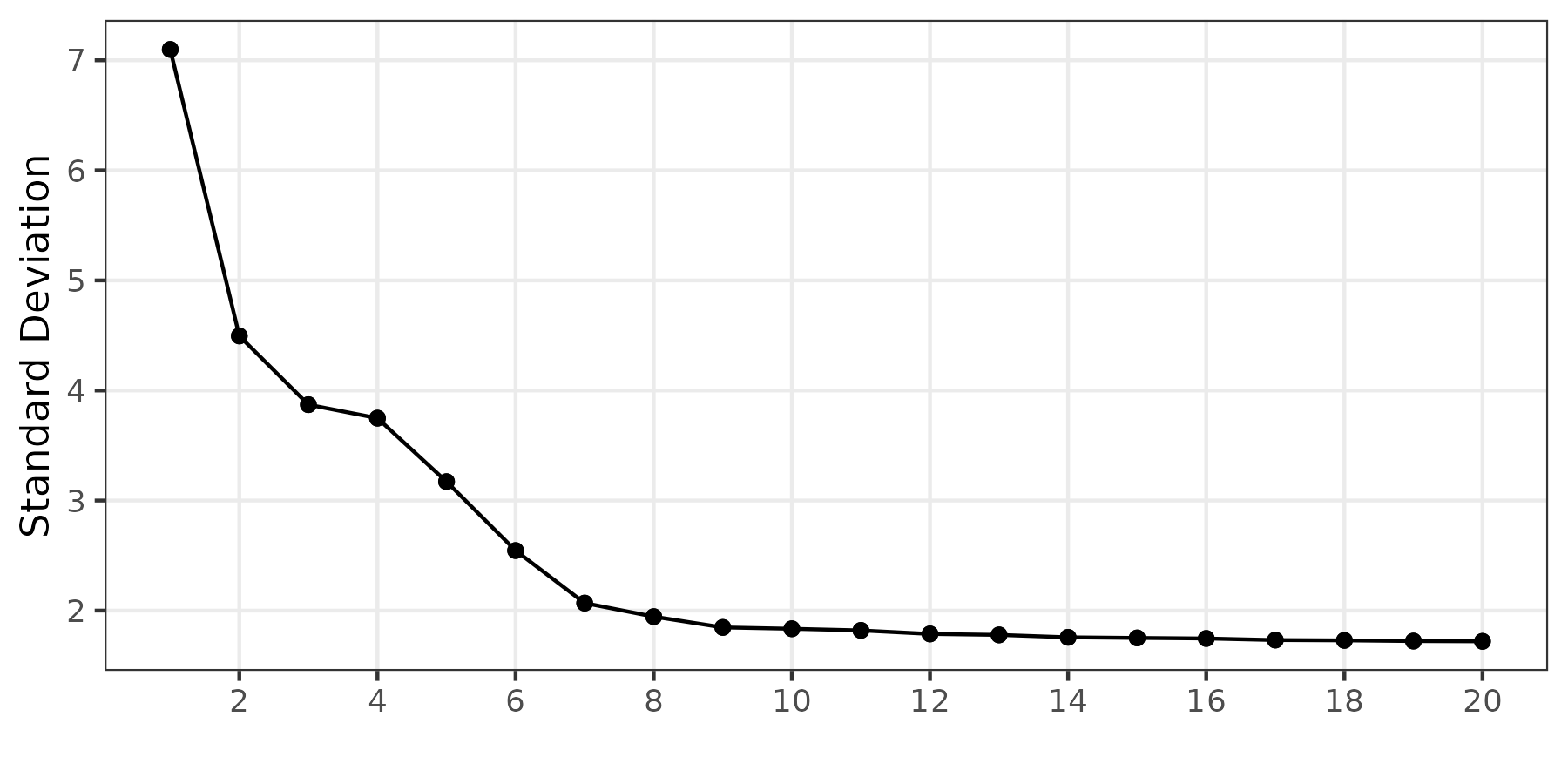}
    \caption{Scree plot visualizing standard deviation captured by each of of the principal component vectors from the scRNA-seq data.}
    \label{fig:pbmc_screeplot}
\end{figure}

\newpage

\header{Supplementary Tables}

\newpage

\begin{table}[ht!]
    \centering
    \begin{tabular}{rrrrrr}
      \toprule
        $K$ & Gap & $\pmis$ & Silhouette & Stability (ARI) & Prediction Strength \\ 
      \midrule
            1 & 0.403 & 0.000 &  ---    & ---   &   --- \\ 
            2 & 0.824 & 0.002 &  0.632  & 0.995 & 1.000 \\ 
            3 & 1.058 & 0.062 &  0.522  & 0.961 & 0.842 \\ 
            4 & 0.901 & 0.101 &  0.417  & 0.758 & 0.618 \\ 
            5 & 0.808 & 0.126 &  0.353  & 0.723 & 0.519 \\ 
            6 & 0.722 & 0.137 &  0.353  & 0.630 & 0.679 \\ 
            7 & 0.740 & 0.158 &  0.348  & 0.619 & 0.399 \\  
       \bottomrule
    \end{tabular}
    \caption{Values of $\pmis$ and other cluster validity indices for the simulated $k$-means example data.}
    \label{tab:overlap_simul_pmis_stability}
\end{table}

\newpage

\begin{table}[ht!]
    \centering
    \begin{tabular}{rrrrrr}
      \toprule
      $K$ & Gap & $\pmis$ & Silhouette & Stability (ARI) & Prediction Strength \\ 
      \midrule
      1 & 0.562 & 0.000 &   --- &   --- &   --- \\ 
      2 & 1.115 & 0.014 & 0.583 & 0.957 & 0.902 \\ 
      3 & 1.208 & 0.025 & 0.595 & 0.947 & 0.897 \\ 
      4 & 0.964 & 0.076 & 0.468 & 0.757 & 0.458 \\ 
      5 & 0.890 & 0.124 & 0.372 & 0.679 & 0.470 \\ 
      6 & 0.886 & 0.147 & 0.385 & 0.646 & 0.362 \\ 
      7 & 0.884 & 0.138 & 0.390 & 0.641 & 0.333 \\ 
      8 & 0.804 & 0.148 & 0.369 & 0.627 & 0.383 \\ 
       \bottomrule
    \end{tabular}
    \caption{Values of $\pmis$ and other cluster validity indices for the Palmer penguins data based on the $k$-means clusterings of the observations.}
    \label{tab:penguins_pmis_stability_kmeans}
\end{table}

\newpage

\begin{table}[ht!]
    \centering
    \begin{tabular}{rrrrrr}
      \toprule
      $K$ & Gap & $\pmis$ & Silhouette & Stability (ARI) & Prediction Strength \\ 
      \midrule
      1 & 0.545 & 0.000 &   --- &   --- &   --- \\ 
      2 & 1.133 & 0.012 & 0.581 & 0.909 & 0.942 \\ 
      3 & 1.325 & 0.024 & 0.595 & 0.943 & 0.899 \\ 
      4 & 1.124 & 0.063 & 0.483 & 0.757 & 0.544 \\ 
      5 & 1.051 & 0.099 & 0.470 & 0.702 & 0.464 \\ 
      6 & 1.024 & 0.141 & 0.382 & 0.652 & 0.413 \\ 
      7 & 1.013 & 0.132 & 0.387 & 0.656 & 0.366 \\ 
      8 & 0.994 & 0.128 & 0.381 & 0.645 & 0.345 \\ 
       \bottomrule
    \end{tabular}
    \caption{Values of $\pmis$ and other cluster validity indices for the Palmer penguins data based on the hierarchical clusterings of the observations.}
    \label{tab:penguins_pmis_stability_hcl}
\end{table}

\newpage

\begin{table}[ht!]
    \centering
    \begin{tabular}{crrrrr}
        \toprule
        $p$ & $\pmis$ & Elapsed (s) & $\hat P_{{\rm mc}}$ & $\sigma(\hat P_{{\rm mc}} )$ & Elapsed (s) \\
        \midrule
        1 & 0.13144 & 0.011 & 0.13143 & 0.00056 & 1.010 \\ 
        2 & 0.13144 & 0.164 & 0.13141 & 0.00041 & 0.885 \\ 
        3 & 0.13144 & 3.163 & 0.13133 & 0.00042 & 0.994 \\ 
        4 & 0.13144 & 23.061 & 0.13132 & 0.00058 & 0.854 \\ 
        5 & 0.13145 & 23.689 & 0.13140 & 0.00051 & 0.954 \\ 
        \bottomrule
    \end{tabular}
    \caption{$\pmis$ values computed using cubature methods and Monte Carlo integration based on 50 replicates. The $\pmis$ and $\hat P_{{\rm mc}}$ values are averages across all replicates. Elapsed time (in seconds) is the average time for a single replicate.}
    \label{tab:mc_cubature_pmis}
\end{table}

\newpage

\header{Supplementary Methods}

\newpage

\section{Monte Carlo $\pmis$ Comparison}

\normalsize

Here we compare $\pmis$ computed using standard cubature methods to $\hat P_{{\rm mc}}$ estimated using a Monte Carlo (MC) integral (as described in Section 4.5). We use 50 replicates to obtain the timing measurements and quantify the uncertainty in the MC integral. All values presented are means over these replicates unless otherwise specified. For the MC integration procedure we use $M = 10^5$ sample points, and parallelize the computation over 8 cores.

The distribution in question consists of three Gaussian clusters with $\pi_k = 1/3$ in $\mathbb{R}^p$ with $I_p$ variance. The cluster are centered at $0^p$ and $\pm d^p$, where $d^p$ is the $p$-dimensional vector whose elements are all $d$. $d$ is set so that the Euclidean distance between $d^p$ and the origin is fixed to be 3; i.e. $d = \sqrt{3^2/p}$. This is so that the value of $\pmis$ remains fixed across dimensions and we do not need to worry about the dimensionality affecting the true value of $\pmis$. The results for dimension $p = 1, \dots, 5$ are presented in Supplementary Table \ref{tab:mc_cubature_pmis}.

Both approaches produce highly similar values of $\pmis$ across dimensions. The standard deviation of the MC estimates is quite low, indicating stability in the estimation procedure. However, while the cubature method evaluation time rapidly increases in $p$, the time for the MC procedure is roughly constant. These together highlight the MC estimation procedure as a viable and accurate approach to estimate $\pmis$, especially in moderate dimensional data (i.e., $p \geq 3$) where cubature methods may struggle to reach a solution in a reasonable time.

\end{document}